\icmltitlerunning{Towards Faster Decentralized Learning: Less Computation and Sparse Communication}
\def\bbA{{\ensuremath{\mathbf A}}}
\def\bbD{{\ensuremath{\mathbf D}}}
\def\bbI{{\ensuremath{\mathbf I}}}
\def\bbL{{\ensuremath{\mathbf L}}}
\def\bbM{{\ensuremath{\mathbf M}}}
\def\bbP{{\ensuremath{\mathbf P}}}
\def\bbQ{{\ensuremath{\mathbf Q}}}
\def\bbW{{\ensuremath{\mathbf W}}}
\def\bbU{{\ensuremath{\mathbf U}}}
\def\bbX{{\ensuremath{\mathbf X}}}
\def\bbY{{\ensuremath{\mathbf Y}}}
\def\bbZ{{\ensuremath{\mathbf Z}}}
\def\bbb{{\ensuremath{\mathbf b}}}
\def\bbx{{\ensuremath{\mathbf x}}}
\def\bbz{{\ensuremath{\mathbf z}}}
\def\bb0{{\ensuremath{\mathbf 0}}}
\newenvironment{shrinkeq}[1]
{ \bgroup
	\addtolength\abovedisplayshortskip{#1}
	\addtolength\abovedisplayskip{#1}
	\addtolength\belowdisplayshortskip{#1}
	\addtolength\belowdisplayskip{#1}}
{\egroup\ignorespacesafterend}
\begin{document}

\twocolumn[
\icmltitle{Towards More Efficient Stochastic Decentralized Learning: \\
           Faster Convergence and Sparse Communication}



\icmlsetsymbol{equal}{*}

\begin{icmlauthorlist}
\icmlauthor{Zebang Shen}{zju}
\icmlauthor{Aryan Mokhtari}{mit}
\icmlauthor{Tengfei Zhou}{zju}
\icmlauthor{Peilin Zhao}{scut}
\icmlauthor{Hui Qian}{zju}
\end{icmlauthorlist}

\icmlaffiliation{zju}{Zhejiang University}
\icmlaffiliation{mit}{Massachusetts Institute of Technology}
\icmlaffiliation{scut}{South China University of Technology}
\icmlcorrespondingauthor{Hui Qian}{qianhui@zju.edu.cn}

\icmlkeywords{Machine Learning, ICML}

\vskip 0.3in
]



\printAffiliationsAndNotice{}  

\begin{abstract}
	Recently, the decentralized optimization problem is attracting growing attention.
	Most existing methods are deterministic with high per-iteration cost and have a convergence rate quadratically depending on the problem condition number.
	Besides, the dense communication is necessary to ensure the convergence even if the dataset is sparse.
	In this paper, we generalize the decentralized optimization problem to a monotone operator root finding problem, and propose a stochastic algorithm named DSBA  that
	(i) converges geometrically with a rate linearly depending on the problem condition number, and (ii) can be implemented using sparse communication only.
	Additionally, DSBA handles learning problems like AUC-maximization which cannot be tackled efficiently in the decentralized setting.
	Experiments on convex minimization and AUC-maximization validate the efficiency of our method.
\end{abstract}

\section{Introduction}
Over the last decade, decentralized learning has received a lot of attention in the machine learning community due to the rise of distributed high-dimensional datasets. This paper focuses on finding a global solution to learning problems in the setting where each node merely has access to a subset of data and are allowed to exchange information with their neighboring nodes only. Specifically, consider a connected network with $N$ nodes where each node $n$ has access to a local function $f_n: \RBB^{ d}\to  \RBB$ which is the average of $q$ component functions $f_{n,i}: \RBB^{ d}\to  \RBB$, i.e. $f_n(\bbx)=(1/q)\sum_{i=1}^q f_{n,i}(\bbx)$. Considering $\bbx_n$ as the local variable of node $n$, the problem of interest is 
\begin{equation}
	\min_{\stackrel{\{\bbx_n\}_{n=1}^N}{\mathrm{s.t.} ~ \bbx_1 = \ldots = \bbx_N}}\sum_{n=1}^{N} \left\{f_n(\xB_n) \triangleq \frac{1}{q}\sum_{i=1}^{q}f_{n, i}(\xB_n)\right\}.
	\label{eqn_DCO}
\end{equation}
The formulation \eqref{eqn_DCO} captures problems in sensor network, mobile computation, and multi-agent control, where either efficiently centralizing data or globally aggregate intermediate results is unfeasible \cite{johansson2008distributed,bullo2009distributed,forero2010consensus,ribeiro2010ergodic}.

Developing efficient methods for such problem has been one of the major efforts in the machine learning community.
While early work dates back to 1980's \cite{tsitsiklis1986distributed,bertsekas1989parallel}, consensus based gradient descent and dual averaging methods with sublinear convergence have made their debut \cite{nedic2009distributed,duchi2012dual}, which consist of two steps: all nodes (i) gather the (usually dense) iterates from theirs neighbors via communication to compute a weighted average, and (ii)~update the average by the full gradient of the local $f_n$ to obtain new iterates.
Following such protocol, successors with linear convergence have been proposed recently \cite{shi2015extra,mokhtari2015decentralized,scaman2017optimal}.

Despite the progress, two entangled challenges, realized by the above interlacing steps, still remain. The first challenge is the \textit{computation complexity} of existing methods. Real world tasks commonly suffer from the \emph{ill-conditionness} of the underlying problem, which deteriorates the performance of existing methods due to their heavy dependence on the problem condition number \cite{shi2015extra,mokhtari2015decentralized}. Besides, even a single node could contain a plethora of data points, which impedes the \emph{full local gradient} evaluation required by most existing methods. The second challenge is the high \textit{communication overhead}. The existing linear convergent methods overlooked such practical issue and simply adopt the \emph{dense communication} strategy, which restrains their applications.

Furthermore, important problems like AUC maximization involves pairwise component functions which take input outside the local nodes.
Multiple rounds of communications are necessary to estimate the gradient, which precludes the direct application of existing linear convergent algorithms.
Only sublinear convergent algorithm exists \cite{colin2016gossip,ying2016stochastic}.


To bridge these gaps, we rephrase problem (\ref{eqn_DCO}) under the monotone operator framework and propose an efficient algorithm named Decentralized Stochastic Backward Aggregation (DSBA).
In the computation step of DSBA, each node computes the resolvent of a stochastically approximated monotone operator to reduce the dependence on the problem condition number.
Such resolvent admits closed form solution in problems like Ridge Regression.
In the communication step of DSBA, each node receives the nonzero components of the difference between consecutive iterates to reconstruct the neighbors' iterates.
Since the $\ell_2$-relaxed AUC maximization problem is equivalent to the minimax problem of a convex-concave function, whose differential is a monotone operator, fitting it into our formulation is seamless.
More specifically, our contributions are as follows:
\begin{enumerate}
	\item 
	DSBA accesses a single data point in each iteration and converges linearly with fast rate. The number of steps required to $\epsilon$ accurate solution is $\OM((\kappa+\kappa_g+q)\log\frac{1}{\epsilon})$, where $\kappa$ is the condition number of the problem and $\kappa_g$ is the condition number of the graph. This rate significantly improves over the existing stochastic decentralized solvers and most deterministic ones, which also holds for the $\ell_2$-relaxed AUC maximization.
	\item In contrast to the dense vector transmission in existing methods, the inter-node communication is sparse in DSBA.
	Specifically, the per-iteration communication complexity is $\OM(\rho d)$ for DSBA and $\OM(d)$ for all the other linear convergent methods, where $\rho$ is the sparsity of the dataset and $d$ is the problem dimension.
	When communication is a critical factor, our sparse communication scheme is more favorable.
\end{enumerate}
Empirical studies on convex minimization and AUC maximization problems are conducted to validate the efficiency of our algorithm. 
Improvements are observed in both computation and communication.
\section*{Notations}
We use the bold uppercase letters to denote matrices and bold lowercase letters to denote vectors.
We refer the $i^{th}$ row of matrix $\WB$ by $[\WB]_{i}$ and refer the element in the $i^{th}$ row and $j^{th}$ column by $[\WB]_{i,j}$.
$\WB^k$ denotes the $k^{th}$ power of $\WB$.
$\mathrm{Proj}_\bbU$ is the projection operator to the range of $\UB$.
\section{Related Work}
\begin{table*}[t]
	\caption{ $\kappa$ is the condition number of the problem and $\kappa_g$ be the condition number of the network graph, defined in section \ref{section_convergence_analysis}. $\Delta(\GM)$ is the max degree of the graph $\GM$. $\rho$ is the sparsity of the dataset, i.e. the ratio of nonzero elements. $\tau$ is the complexity of solving a $1$-dimensional equation, and is $\OM(1)$ in problems like Ridge Regression. All the complexity are derived for problems with linear predictor.}
	\centering
	\begin{tabular}{c|c|c|c}
		\hline
		Method &                          Convergence Rate                           &         Per-iteration Cost          & Communication Cost  \\ \hline
		EXTRA \cite{shi2015extra}  &          $\OM((\kappa^2+\kappa_g)\log\frac{1}{\epsilon})$           &     $\OM(\rho qd+\Delta(\GM)d)$     & $\OM(\Delta(\GM)d)$ \\ \hline
		 DLM \cite{ling2015dlm}   &       $\OM((\kappa^2+\kappa_g\kappa)\log\frac{1}{\epsilon})$        &     $\OM(\rho qd+\Delta(\GM)d)$     & $\OM(\Delta(\GM)d)$ \\ \hline
		 SSDA \cite{scaman2017optimal} &         $\OM(\sqrt{\kappa \kappa_g}\log\frac{1}{\epsilon})$         & $\OM(\rho qd+ q\tau +\Delta(\GM)d)$ & $\OM(\Delta(\GM)d)$ \\ \hline
		 DSA \cite{mokhtari2016dsa}  & $\OM((\kappa^4\kappa_g+\kappa_g^2+\kappa q)\log\frac{1}{\epsilon})$ &     $\OM(\rho d+\Delta(\GM)d)$      & $\OM(\Delta(\GM)d)$ \\ \hline
		 DSBA (this paper) &        $\OM((\kappa + \kappa_g + q)\log\frac{1}{\epsilon})$         &  $\OM(\rho d+ \tau +\Delta(\GM)d)$  & $\OM(\Delta(\GM)d)$ \\ \hline
		DSBA-s (this paper) &        $\OM((\kappa + \kappa_g + q)\log\frac{1}{\epsilon})$         &      $\OM(\rho d+ \tau +N^2d)$      &   $\OM(N\rho d)$    \\ \hline
	\end{tabular}
	\label{table_complexity}
\end{table*}

{\bf Deterministic Methods:} 
Directly solving the primal objective, the consensus-based Decentralized Gradient Descent (DGD) method \cite{nedic2009distributed,yuan2016convergence} has been proposed, yielding sublinear convergence rate.
EXTRA \cite{shi2015extra} improves over DGD by incorporating information from the last two iterates and is shown to converge linearly. Alternatively, D-ADMM \cite{shi2014linear} directly applies ADMM method to problem (\ref{eqn_DCO}) and achieves linear convergence. However, D-ADMM computes the proximal operator of $f_n$ in each iteration. 
To avoid such expensive proximal operator computation, \citeauthor{ling2015dlm} propose a linearized variant of D-ADMM named DLM \cite{ling2015dlm}. There also have been some efforts to exploit second-order information for accelerating convergence in ill-condition problems \cite{mokhtari2017network,eisen2017decentralized}. From the dual perspective, \cite{duchi2012dual} uses the dual averaging method and obtains a sublinear convergent algorithm. 
The work in \cite{necoara2017random} applies the random block coordinate gradient descent on the dual objective to obtain linear convergence with a rate that depends on  $\tau$, the number of blocks being selected per iteration.
When $\tau > 2$, multiple rounds of communications are needed to implement the method.
Recently, \cite{scaman2017optimal} applies the accelerated gradient descent methods on the dual problem of (\ref{eqn_DCO}) to give a method named SSDA and its multi-step communication variant MSDA and shows that the proposed methods are optimal. 
However, both SSDA and MSDA require computing the gradient of the conjugate function $f_n^*$.
All the above methods access the whole dataset in each iteration without exploiting the finite sum structure.

{\bf Stochastic Methods:} By incorporating the SAGA approximation technique, \citeauthor{mokhtari2016dsa} recently proposed a method named DSA to handle Problem (\ref{eqn_DCO}) in a stochastic manner.
In each iteration, it only computes the gradient of a single component function $f_{n, i}$, which is significantly cheaper than the full gradient evaluation \cite{mokhtari2016dsa}. DSA converges linearly, while the overall required complexity heavily depends on function and graph condition numbers.

%

We summarize the convergence rate, computation and communication cost of the aforementioned methods in Table~\ref{table_complexity}.

%
%

\section{Preliminary}


\subsection{Monotone Operator}
Monotone operator is a tool for modeling optimization problems including convex minimization \cite{rockafellar1970maximal} and minimax problem of convex-concave functions \cite{rockafellar1970monotone}.
A relation $\BM$ is a monotone operator if 
\begin{equation}
	(u - v)^\top(x -y)\geq 0, \forall (x,u), (y,v)\in \BM.
\end{equation}
$\BM$ is maximal monotone if there is no monotone operator that properly contains it.
We say an operator $\BM$ is $\mu$-strongly monotone if
\begin{equation}
\langle \BM (\xB) - \BM(\yB), \xB - \yB \rangle \geq \mu\|\xB - \yB\|^2
\end{equation}
and is $\frac{1}{L}$-cocoercive if
\begin{equation}
\langle \BM(\xB) - \BM(\yB), \xB - \yB \rangle \geq \frac{1}{L}\|\BM(\xB) - \BM(\yB)\|^2.
\end{equation}
The cocoercive property implies the maximality and the Lipschitz continuity of $\BM$, 
\begin{equation}
\|\BM(\xB) - \BM(\yB)\| \leq L\|\xB - \yB\|,
\end{equation}
but not vise versa \cite{bauschke2017convex}.
However, if $\BM$ is both Lipschitz continuous and strongly monotone, it is cocoercive.
We denote the identity operator by $\IM$ and define the resolvent $\JM_\BM$ of a maximal monotone operator $\BM$ as
\begin{equation}
\JM_\BM \triangleq (\IM + \BM)^{-1}.
\end{equation}
Finding the root of a maximal monotone operator is equivalent to find the fixed point of its resolvent:
\begin{shrinkeq}{-1ex}
\begin{equation}
	\zB^* = \JM_\BM(\zB^*) \Leftrightarrow \zB^* + \BM(\zB^*) = \zB^* \Leftrightarrow \BM(\zB^*) = \zeroB,
\end{equation}
\end{shrinkeq}{-1ex}
and when $\BM = \nabla f_{n, i}$, $\JM_\BM$ is equivalent to the proximal operator of function $f$.


\subsection{Convex-concave Formulation of AUC Maximization}
Area Under the ROC Curve (AUC) \cite{hanley1982meaning} is a widely used metric for measuring performance of classification, defined as
\begin{shrinkeq}{-1ex}
	\begin{equation}
	\sum_{i, j=1}^{q} \mathbbm{1}{\{h(\wB;\aB_i)\geq h((\wB;\aB_j)|y_i=+1,y_j=-1\}},
\end{equation}
\end{shrinkeq}{-1ex}
where $\{(\aB_j, y_j)\}_{i=j}^q$ is the set of samples, and $h(\cdot)$ is some scoring function.
However, directly maximizing AUC is NP-hard as it is equivalent to a combinatorial optimization problem \cite{gao2013one}.
Practical implementations take $h(\wB; \aB) = \aB_i^\top \wB$ and replace the discontinuous indicator function $\mathbbm{1}$ with its convex surrogates, e.g. the $\ell_2$-loss
\begin{shrinkeq}{-1ex}
	\begin{equation}
	F(\wB) =\! \frac{1}{q^+ q^-}\sum_{y_i=+1, y_j=-1}(1-\wB^\top(\aB_i-\aB_j))^2,
	\label{eqn_AUC_maximization_pairwise}
\end{equation}
\end{shrinkeq}
where $q^+$ and $q^-$ are the numbers of positive and negative instances.
However, $F(\cdot)$ comprises of pairwise losses 
\begin{equation}
	f_{i,j}(\wB) \!=\! (1-\wB^\top(\aB_i-\aB_j))^2\mathbbm{1}\{y_i=\!+1,y_j=-1\},	
\end{equation}
each of which depends on two data points.
As discussed in \cite{colin2016gossip}, minimizing (\ref{eqn_AUC_maximization_pairwise}) in a decentralized manner remains a challenging task.

For $a, b\in\RBB$, define $\bar{\wB} = [\wB, a, b] \in\RBB^{d+2}$. 
\cite{ying2016stochastic} reformulates the maximization of function (\ref{eqn_AUC_maximization_pairwise}) as
\begin{shrinkeq}{-1ex}
	\begin{equation}
	\min_{\bar{\wB}\in\RBB^{d+2}} \max_{\theta\in\RBB} F(\bar{\wB}, \theta) = \frac{1}{q}\sum_{i=1}^q f(\bar{\wB}, \theta; \aB_i),
	\label{eqn_AUC_minimax}
\end{equation}
\end{shrinkeq}
where, for $p = q^+/q$ the function $f(\bar{\wB}, \theta; \aB_i)$ is given by
\begin{align}
		 &f(\bar{\wB}, \theta; \aB_i)  = - p(1-p)\theta^2 + \frac{\lambda}{2}\|\wB\|^2 \\
		 & + (1-p)(\wB^\top\aB_i - a)^2\mathbbm{1}_{\{y_i = 1\}} + p(\wB^\top\aB_i - b)^2\mathbbm{1}_{\{y_i = -1\}}
	\nonumber\\
	&	  +2(1+\theta)(p\wB^\top\aB_i\mathbbm{1}_{\{y_i = -1\}} - (1-p)\wB^\top\aB_i\mathbbm{1}_{\{y_i = 1\}}). \nonumber
\end{align}
Such singleton formulation is amenable to decentralized framework because $f$ only depends on a single data point.

\section{Problem Formulation} \label{section_problem_formulation}
Consider a set of  $N$ nodes which create a connected graph $\GM = \{\VM, \EM\}$ with the node set $\VM = \{1, \ldots, N\}$ and the edge set $\EM=\{(i,j) \mid \text{if} \ i,j \ \text{are connected}\}$. We assume that the edges are reciprocal, i.e., $(i,j)\in \EM$ iff $(j,i)\in \EM$ and denote $\NM_n$ as the neighborhood of node $n$, i.e. $\NM_n = \{m: (m,n) \in \EM\}$.

For the decision variable $\zB \in \RBB^d$, consider the problem of finding the root of the operator $\sum_{n=1}^{N} \BM_n(\bbz) $, where the operator $\BM_n: \RBB^{ d}\mapsto \RBB^{ d}$ is only available at node $n$ and is defined as the sum of $q$ Lipschitz continuous strongly monotone operators $\BM_{n,i}: \RBB^{ d}\mapsto \RBB^{ d}$.

To handle this problem in a decentralized fashion we define $\bbz_n$ as the local copy of $\bbz$ at node $n$ and solve the program
\begin{equation}\label{eqn_decentralized_consensus_root_monotone_operator}
	\find_{\stackrel{\{\bbz_n\}_{n=1}^N}{\bbz_1 = \ldots = \bbz_N}} \sum_{n=1}^{N} \BM_n(\bbz_n) = \sum_{n=1}^{N} \frac{1}{q}\sum_{i=1}^{q} \BM_{n, i}(\bbz_n) = \zeroB.
\end{equation}
The finite sum minimization problem \eqref{eqn_DCO} is a special case of \eqref{eqn_decentralized_consensus_root_monotone_operator} by setting $\BM_{n,i}= \nabla f_{n, i}$, and the $\ell_2$-relaxed AUC maximization (\ref{eqn_AUC_minimax}) is captured by choosing $\BM_{n, i}(\zB) = [\frac{\partial f}{\partial \bar{\wB}}; -\frac{\partial f}{\partial \theta}]$ with $\zB = [\bar{\wB}; \theta]$.
Since $\BM_{n, i}$ is strongly monotone and Lipschitz continuous, it is cocoercive \cite{bauschke2017convex}.

To have a more concrete understanding of the problem, we first introduce an equivalent formulation of Problem~\eqref{eqn_decentralized_consensus_root_monotone_operator}. 
Define the matrix $\bbZ=[\bbz_1^\top;\dots;\bbz_n^\top] \in \RBB^{N\times d}$ as the concatenation of the local iterates $\bbz_n$ and the operator $\BM(\bbZ): \RBB^{N\times d}\mapsto \RBB^{N\times d}$ as $\BM(\bbZ):=[\BM_1(\bbz_1)^\top;\dots;\BM_n(\bbz_n)^\top]$.
Consider the mixing matrix $\bbW=[w_{m,l}]\in \RBB^{N\times N}$ satisfying the following conditions, which satisfies

\quad {\bf (i) (Graph sparsity)} If $m \notin \NM_l$, then $w_{m,l} = 0$;

\vspace{-1mm}
\quad {\bf (ii) (Symmetry)} $\bbW = \bbW^\top$;

\vspace{-1mm}
\quad{\bf (iii) (Null space property)} $\mathrm{null} (\bbI - \bbW) = \mathrm{span} \{\oneB_N\}$;

\vspace{-1mm}
\quad{\bf (iv) (Spectral property)} $0 \preccurlyeq \bbW \preccurlyeq \bbI_N$.

It can be shown that Problem~\eqref{eqn_decentralized_consensus_root_monotone_operator} is equivalent to
\begin{equation}\label{eqn_decentralized_consensus_root_monotone_operator_version_2}
\begin{aligned}
\find_{\bbZ\in \RBB^{N\times d}}& \ \ \BM(\bbZ)^\top\oneB_N = \zeroB_d\\
\text{subject to}&\ \ (\bbI_N-\bbW)\bbZ=\bb0_{N\times d}.
\end{aligned}
\end{equation}
This is true since $\Null(\IB-\WB)=\Span(\oneB_N)$ and therefore the condition $(\bbI-\bbW)\bbZ=\bb0$ implies that a matrix $\bbZ$ is feasible iff $ \bbz_1 = \ldots = \bbz_N$.

If we define $\bbU\triangleq(\bbI-\bbW)^{1/2}$, the optimality conditions of Problem~\eqref{eqn_decentralized_consensus_root_monotone_operator_version_2} imply that there exists some $\PB^* \in \RBB^{N\times d}$, such that for $\QB^* = \bbU \PB^*$ and $\alpha>0$	
\begin{equation}
		\bbU \QB^* + \alpha \BM(\ZB^*) = \zeroB ~\mathrm{and}~ -\bbU \ZB^* = \zeroB,
		\label{eqn_optimality}
	\end{equation}
where $\bbZ^*\in \RBB^{N\times d}$ is a solution of Problem~\eqref{eqn_decentralized_consensus_root_monotone_operator_version_2}.
Note that $\mathrm{span}(\bbI - \bbW) = \mathrm{span}(\bbU)$.
The first equation of (\ref{eqn_optimality}) depicts the optimality of $\bbZ^*$: if $\bbZ^*$ is a solution, every column of $\BM(\bbZ^*)$ is in $\mathrm{span}\{\oneB_N\}^\perp = \mathrm{span}(\bbU)$ and hence there exists $\bbP\in\RBB^{N \times d}$ such that $\UB\bbP + \alpha \BM(\ZB^*) = \zeroB$.
We can simply take $\bbQ^* = \mathrm{Proj}_\bbU\bbP$ which gives $\bbU\bbQ^* = \bbU\bbP$.
The second equation of (\ref{eqn_optimality}) describes the consensus property of $\bbZ^*$ and is equivalent to the constraint of Problem (\ref{eqn_decentralized_consensus_root_monotone_operator_version_2}).

Using \eqref{eqn_optimality}, we formulate Problem (\ref{eqn_decentralized_consensus_root_monotone_operator}) as finding the root of the following operator
\begin{equation}
\TM(\bbA) = \bigg(
\underbrace{\begin{bmatrix}
	\BM& 0 \\
	0& 0
	\end{bmatrix}}_{\TM_1} +
\underbrace{\frac{1}{\alpha}\begin{bmatrix}
	\zeroB & \bbU \\
	-\bbU& \zeroB
	\end{bmatrix}}_{\TM_2}
\bigg)
\underbrace{\begin{bmatrix}
	\ZB \\
	\YB
	\end{bmatrix}}_{\bbA},
\label{eqn_augmented_operator}
\end{equation}
where the augmented variable matrix $\bbA \in \RBB^{2N \times d}$ is obtain by concatenating $\bbZ$ with $\bbY \in \RBB^{N \times d}$. Using the result in \cite{davis2015convergence}, it can be shown that $\TM$ is a maximally monotone operator, and hence its resolvent $\JM_\TM$ is well defined.
Unfortunately, directly implementing the fixed point iteration 
$	\bbA^{t+1} = \JM_\TM (\bbA^t)$ 
requires access to global information which is infeasible in decentralized settings. Inspired by \cite{wu2016decentralized}, we introduce the positive definite matrix 
\begin{equation}
\bbD \triangleq \frac{1}{\alpha} \begin{bmatrix}
\bbI & \bbU \\
\bbU & \bbI
\end{bmatrix},
\end{equation}
and use the fixed point iteration of the resolvent of $\bbD^{-1}\TM$ to find the root of (\ref{eqn_augmented_operator}) according to the recursion
\begin{equation}
	\bbA^{t+1} = \JM_{\bbD^{-1}\TM} (\bbA^t)
	\label{eqn_resolvent_fix_point_iteration}.
\end{equation}
Note that since $\bbD$ is positive definite, $\bbD^{-1}\TM$ shares the same roots with $\TM$, therefore the solutions of the fixed point updates of $\JM_{\bbD^{-1}\TM} $ and $\JM_{\TM}$ are identical. 

The main advantage of the recursion in \eqref{eqn_resolvent_fix_point_iteration} is that it can be implemented with a single round of local communication only.
However, (\ref{eqn_resolvent_fix_point_iteration}) is usually computationally expensive to evaluate.
For instance, when $\BM_{n, i} = \nabla f_{n, i}$, (\ref{eqn_resolvent_fix_point_iteration}) degenerates to the update of P-EXTRA \cite{shi2015proximal}, which computes the proximal operator of $f_n = \frac{1}{q}\sum_{i=1}^{q} f_{n, i}$ in each iteration.
The evaluation of such proximal operator is considered computational costly in general, especially for large-scale optimization. 

In the following section, we introduce an alternative approach that improves the update in (\ref{eqn_resolvent_fix_point_iteration}) in terms of both computation and communication cost by stochastically approximating $\TM$.

\section{Decentralized Stochastic Backward Aggregation}

In this section, we propose the Decentralized Stochastic Backward Aggregation (DSBA) algorithm for Problem~\eqref{eqn_decentralized_consensus_root_monotone_operator}.
By exploiting the finite sum structure of each $\BM_n$ and the sparsity pattern in component operator $\BM_{n, i}$, DSBA yields lower per-iteration computation and communication cost.

Let $i_n^t$ be a random sample, approximate $\BM_n(\zB)$ by
\begin{equation}
\hat{\BM}^t_n(\zB) = \BM_{n, i_n^t}(\zB) - \phi^t_{n, i_n^t} + \frac{1}{q}\sum_{i=1}^{q}\phi^t_{n, i},
\end{equation}
where $\phi^t_{n, i} = \BM_{n, i}(\yB_{n, i}^t)$ is the history operator output maintained in the same manner with SAGA \cite{defazio2014saga}.
We further denote $\bar{\phi}_n^t = \frac{1}{q}\sum_{i=1}^{q}\phi^t_{n, i}$.
Using such definition $\hat{\BM}^t_n(\zB)$, we replace the operators ${\BM}$ and ${\TM}_1$ in (\ref{eqn_augmented_operator}) by their approximate versions, defined as
\begin{equation}
\hat{\BM}^t(\ZB) = \begin{bmatrix}
\hat{\BM}^t_1(\zB_1) \\
\vdots \\
\hat{\BM}^t_N(\zB_N)
\end{bmatrix}
~\mathrm{and}~
\hat{\TM}_1^t = \begin{bmatrix}
\hat{\BM}^t& 0 \\
0& 0
\end{bmatrix}.
\end{equation}
Hence, the fixed point update (\ref{eqn_resolvent_fix_point_iteration}) is changed to
\begin{equation}
\bbA^{t+1} = (\IM + \bbD^{-1}(\hat{\TM}_1^t + \TM_2))^{-1} (\bbA^t),
\end{equation}
which by plugging in the definitions of $\bbA$, $\bbD$, $\hat{\TM}_1^t$, and $ \TM_2$ can be written as
\begin{align}
\ZB^{t+1} + 2\bbU\YB^{t+1} &= \ZB^t - \alpha\hat{\BM}^t(\ZB^{t+1}) + \bbU\YB^t, \label{eqn_derivation_I}\\
\YB^{t+1} &= \bbU\ZB^t + \YB^t. \label{eqn_derivation_II}
\end{align}
Computing the difference between two consecutive iterations of (\ref{eqn_derivation_I}) and using (\ref{eqn_derivation_II})
lead to the update of the proposed DSBA algorithm, for $t>1$,
\begin{shrinkeq}{-1ex}
	\begin{equation}\label{eqn_update_derivation_t}
\ZB^{t+1}\! \triangleq  2\tilde{\bbW}\ZB^t -\tilde{\bbW}\ZB^{t-1}\!\!- \alpha(\hat{\BM}^t(\ZB^{t+1}) - \hat{\BM}^{t-1}(\ZB^{t})),
\end{equation}
\end{shrinkeq}
where $\tilde{\WB} = (\bbW + \bbI)/2 = [\tilde{w}_{m, n}]\in\RBB^{N\times N}$. By setting $\YB^0 = \zeroB$, the update for step $t = 0$ is given by 
\begin{shrinkeq}{-1ex}
	\begin{equation}
\ZB^1 \triangleq \bbW\ZB^0 - \alpha\hat{\BM}^0(\ZB^{1}). 
\label{eqn_update_derivation_0}
\end{equation}
\end{shrinkeq}
\begin{algorithm}[t]
\caption{DSBA for node $n$}
\begin{algorithmic}[1]
	\REQUIRE consensus initializer $\zB^0$, step size $\alpha$, $\bbW$, $\tilde{\bbW}$;
	\STATE For all $i \in [q]$, initialize $\phi_{n, i}^0 = \BM_{n, i}(\zB^0)$, set $\delta^0_n = 0$;
	\FOR{$t = 0,1,2, \dots$}
	\STATE Gather the iterates $\zB_m^t$ from neighbors $m \in \NM_n$;
	\STATE Choose $i_n^t$ uniformly at random from the set $[q]$;
	\STATE Update $\psi_n^t$ according to (\ref{eqn_update_derivation_0_single_simple_phi}) ($t=0$) or (\ref{eqn_psi_def}) ($t>0$)
	\STATE Compute $\zB_n^{t+1}$ from (\ref{eqn_update_derivation_t_single_simple_z});
	\STATE Compute $\delta_n^t = \BM_{n, i_n^t}(\zB_n^{t+1}) - \phi^t_{n, i_n^t}$;
	\STATE Set $\phi^{t+1}_{i_n^t} = \BM_{n, i_n^t}(\zB_n^{t+1})$ and $\phi^{t+1}_{i} = \phi^t_{i}$ for $i\neq i_n^t$;
	\ENDFOR
\end{algorithmic}
\label{alg_main}
\end{algorithm}
\noindent{\bf Implementation on Node $n$}\\
We now focus on the detailed implementation on a single node $n$. 
The local version of the update \eqref{eqn_update_derivation_t} writes
\begin{shrinkeq}{-1ex}
	\begin{equation}
	\zB_n^{t+1}\!\triangleq \!\!\sum_{m \in \NM_n}\!\!\tilde{w}_{n, m}(2\zB_m^t - \zB_m^{t-1}) - \alpha(\hat{\BM}_n^t(\zB_n^{t+1}) - \hat{\BM}_n^{t-1}(\zB_n^{t})).
	\label{eqn_update_derivation_t_single}
	\end{equation}
\end{shrinkeq}
This update can be further simplified. 
Using the definition
\begin{equation}
\delta_n^t \triangleq \BM_{n, i_n^t}(\zB_n^{t+1}) - \phi^t_{n, i_n^t},
\label{eqn_def_delta}
\end{equation} 
we have $\hat{\BM}_n^{t+1} - \hat{\BM}_n^{t} = \delta_n^t- \frac{q-1}{q}\delta_n^{t-1}$, and therefore the update in \eqref{eqn_update_derivation_t_single} can be simplified to
\begin{shrinkeq}{-1ex}
	\begin{equation}
\zB_n^{t+1}\! \triangleq\! \sum_{m \in \NM_n}\!\tilde{w}_{n, m}(2\zB_m^t - \zB_m^{t-1}) + \alpha( \frac{q-1}{q}\delta_n^{t-1}- \delta_n^t).
\label{eqn_update_derivation_t_single_simple}
\end{equation}
\end{shrinkeq}
Note that $\delta_n^t$ shares the same nonzero pattern as the dataset and is usually sparse.
For the initial step $t = 0$, since $\hat{\BM}_n^1 = \delta_n^0 + \bar{\phi}_n^0$, we have 
$\zB_n^{1} = \sum_{m \in \NM_n}w_{n, m}\zB_m^0 - \alpha(\delta_n^0 + \bar{\phi}_n^0)$. 
However, we cannot directly carry out \eqref{eqn_update_derivation_t_single_simple} since $\delta_n^t$ involves the unknown $\zB_n^{t+1}$. To resolve this issue we define for $t\geq1$
\begin{shrinkeq}{-1ex}
	\begin{equation}\label{eqn_psi_def}
		\psi_n^t\triangleq\!\sum_{m \in \NM_n} \! \tilde{w}_{n, m}(2\zB_m^t - \zB_m^{t-1}) + \alpha( \frac{q-1}{q}\delta_n^{t-1}\! +\phi^t_{n, i_n^t}).
	\end{equation}
\end{shrinkeq}
Using \eqref{eqn_psi_def} and \eqref{eqn_update_derivation_t_single_simple}, it can be easily verified that $ \zB_n^{t+1}+\alpha\BM_{n, i_n^t}(\zB_n^{t+1})={\psi_n^t}$, therefore $\zB_n^{t+1}$ can be computed as
\begin{equation}
\zB_n^{t+1} = \JM_{\alpha\BM_{n, i}}(\psi_n^t)= (I + \alpha \BM_{n, i_n^t})^{-1}(\psi_n^t). \label{eqn_update_derivation_t_single_simple_z}
\end{equation}
Indeed, the outcome of the updates in \eqref{eqn_psi_def}-\eqref{eqn_update_derivation_t_single_simple_z} is equivalent to the update in \eqref{eqn_update_derivation_t_single_simple}, and they can be computed in a decentralized manner. Also, for the initial step $t=0$, the variable $\bbz_n^1$ can be computed according to \eqref{eqn_update_derivation_t_single_simple_z} with 
\begin{equation}
{\psi_n^0}:=\sum_{m \in \NM_n}w_{n, m}\zB_m^0 + \alpha(\phi^0_{n, i_n^0} - \bar{\phi}_n^0).
\label{eqn_update_derivation_0_single_simple_phi}
\end{equation}
The resolvent (\ref{eqn_update_derivation_t_single_simple_z}) can be obtained by solving a one dimensional equation for learning problems like Logistic Regression, and admits closed form solution for problems like least square and $\ell_2$-relaxed AUC maximization.
DSBA is summarized in Algorithm~\ref{alg_main}.

\begin{remark}
	DSBA is related to DSA in the case that $\BM_{n, i}$ is the gradient of a function, i.e. $\BM_{n, i} = \nabla f_{n, i}$.
	In each iteration, if we compute $\delta_n^t$ with 
	\begin{equation}
	\delta_n^t = \BM_{n, i_n^t}(\zB_n^t) - \phi^t_{n, i_n^t},
	\end{equation}
	i.e. we evaluate $\BM_{n, i_n^t}$ at the $\zB_n^t$ instead of $\zB_n^{t+1}$, we recover the DSA method \cite{mokhtari2016dsa}.
	In such gradient operator setting, when the is only a single node, DSBA degenerates to the Point-SAGA method \cite{defazio2016simple}.
\end{remark}
\vspace{-3ex}
\subsection{Implementation with Sparse Communication}
In existing decentralized methods, nodes need to compute the weighted averages of their neighbors' iterates, which are dense in general.
Therefore a $d$-dimensional full vector must be transmitted via every edge $(m, l)\in\EM$ in each iteration.

In this section, we assume the output of every component operator $\BM_{n, i}$ is $\rho$-sparse, i.e. $nnz(\BM_{n, i}(\zB))/d \leq \rho$ for all $\zB\in\RBB^d$ and show that DSBA can be implemented by only transmitting the usually sparse vector $\delta_n^t$ (\ref{eqn_def_delta}).

WLOG, we take the perspective of node $0$ to describe the communication and computation strategies.
First, we define the topological distance $\xi_i$ from node $i$ to node $0$ by
\begin{shrinkeq}{-1ex}
	\begin{equation}
\argmin_{k\in\NBB} ~\mathrm{s.t.}~ [W^k]_{0, i}\neq 0,
\end{equation}
\end{shrinkeq}
and we have, $[W^k]_{0, i}\! =\! 0$ for all node $i$ with distance $\xi_i\!>\!k$.
Let the diameter of the network be $E = \max_{i\in\VM} \xi_i$.

All the communication in the network happens when computing $\psi_0^t$. For $n=0$ and we unfold the iteration (\ref{eqn_psi_def}) by the definition of $\ZB^t$ in (\ref{eqn_update_derivation_t}),
\begin{shrinkeq}{-2ex}
	\begin{align}\label{eqn_psi_unfold}
\psi_0^t  &= 2^E[W^E]_0\ZB^{t-E}- \sum_{\tau = 1}^{E}2^{\tau-1}[W^\tau]_0\ZB^{t-\tau-1}\nonumber\\
& + \sum_{\tau = 1}^{E}2^{\tau}[W^\tau]_0\DeltaB^{t-\tau}\! + \alpha(\frac{1-q}{q}\delta_0^{t-1}+\phi^t_{0, i_0^t}), \nonumber\\
& = \textcircled{1} + \textcircled{2} + \textcircled{3} + \textcircled{4}
\end{align}
\end{shrinkeq}
where 
$
\DeltaB^{t} = [(\delta_0^t - \delta_0^{t-1})^\top; \ldots; (\delta_{N-1}^t - \delta_{N-1}^{t-1})^\top].
$
Suppose that we have a communication strategy that satisfies the following assumption: before evaluating (\ref{eqn_psi_unfold}), node $0$ has the set  $\QM_t = \{\delta^\tau_{n}: \tau+\xi_n \leq t, n\neq 0\}$.
\textcircled{3} can be computed because computing each term of \textcircled{3}, $[W^\tau]_0\DeltaB^{t-\tau}$, only needs \{$\delta_{i}^{t-\tau}: \xi_i \leq \tau$\}.
Further, if we can inductively ensure that \textcircled{1} and every term \textcircled{2} are in the memory of node $0$ before computing (\ref{eqn_psi_unfold}),  $\psi_0^t$ can be computed since \textcircled{4} is local information.

In the following, we introduce a communication strategy that satisfies the assumption and show that the inductions on \textcircled{1} and \textcircled{2} holds.\\
\noindent{\bf Communication:} We group the nodes based on the distance: $\VM_j = \{n \in \{0, \ldots, N-1\}: \xi_n = j\}$.
Define the set $\GM^t_j \triangleq \{\delta_n^t: n \in \VM_j\}$.
Let $\FM_E^t \triangleq \{\GM_E^t\}$, we recursively define $\FM_j^t \triangleq \FM_{j+1}^{t-1} \cup \{\GM^t_j\}$.
Our communication strategy is, in the $t^{th}$ iteration, $\VM_j$ sends the set $\FM_j^t = \FM_{j+1}^{t-1} \cup \{\GM^t_j\} = \{\GM^\tau_i: i+\tau = t+j, i\geq j\}$ to $\VM_{j-1}$.
From such strategy, in iteration $t$, node $0$ receives from $\VM_{j}$ the set $\FM_{1}^t = \{\GM^\tau_i: i+\tau = t, i\geq1\}$.
Note that if $\delta_n^\tau$ appears in multiple neighbors of node $0$, only the one with the minimum node index sends it to node $0$.
Since $\QM_t = \cup_{\tau \leq t}\FM_{1}^\tau$, the desired set is obtained.

\noindent{\bf Computation:}
We now inductively show that \textcircled{1} and \textcircled{2} can be computed.
At the beginning of iteration $t$, assume that $\{[\tilde{\WB}^\tau]_0\ZB^{t-\tau}, \tau \in [E]\}$, $\ZB^{t-E - 1}$, and $\ZB^{t-E - 2}$ are maintained in memory. 
According to the above discussion, $\psi_0^t$ can be computed and hence $\zB_0^{t+1}$ and $\delta_0^t$ can be obtained.
To maintain the induction, we compute $\ZB^{t-E}$ by its definition (\ref{eqn_update_derivation_t_single_simple}) where $\{\delta_n^{t-E}, n = 0, \ldots, N-1\}$ (by the communication strategy), $\ZB^{t-E - 1}$, and $\ZB^{t-E - 2}$ (by induction) are already available to node $0$.
To obtain $\{[W^\tau]_0\ZB^{t-\tau}, \tau \in [E-1]\}$, we compute $[W^E]_0\ZB^{t-E}$ first and then compute recursively
\begin{align}\label{eqn_inductive_computation}
[W^{\tau-1}]_0\ZB^{t-\tau+1} &= 2[W^\tau]_0\ZB^{t-\tau} - [W^\tau]_0\ZB^{t-\tau -1} 
\nonumber\\
&\qquad+ [W^{\tau-1}]_0\DeltaB^{t - \tau + 1},
\end{align}
for $\tau = E, \ldots, 2$, where the first term is from induction,  the second term is in memory, and the last term is computed in \textcircled{3}.
We summarize our strategy in Algorithm \ref{alg_computation}.

As the choice of node $0$ is arbitrary, we use the aforementioned communication and computation strategies for all nodes.
By induction, if each node $n$ generates $\delta_n^{t-1}$ correctly at iteration $t-1$, we can show that $\psi_n^t$ and hence $\delta_n^{t}$ can also be correctly computed in the same manner.
The computation complexity at each node is $\OM(dN^2)$, dominated by step 1 in Algorithm \ref{alg_computation}.

The average communication complexity is of $\OM(Nd\rho)$. 
WLOG, use node $0$ as a proxy of all nodes. The computation part requires the set $\FM_1^t$ to be received by node $0$ at time $t$. 
Removing the duplicate, we have $|\FM_1^t|\leq N$. 
Hence, the number of DOUBLEs received by node $0$ is of $\OM(Nd\rho)$. Further, since that of data sent by all nodes equals to the amount of data received by all nodes, we have the result.

The local storage requirement of DSBA is $O(qd\rho+Nd)$. 
Aside from the $O(\rho qd)$ storage for the dataset, a node stores a delayed copy of other nodes which costs a memory of $\OM(Nd)$, and due to the use of linear predictor the cost of storing gradient information at each node is $\OM(q)$, \cite{schmidt2017minimizing}. Hence, the overall required storage is $\OM(qd\rho+Nd+q)$. As $\rho\cdot d$ is the number of nonzero elements in the vector it follows that $\rho\cdot d>=1$ and hence $\OM(q)\leq\OM(qd\rho)$. Further, if we assume every sample has more than N nonzero entries, $\OM(qd\rho)$ dominates $\OM(Nd)$ as well, we need a memory of $\OM(qd\rho)$.
\renewcommand{\algorithmicrequire}{\textbf{Require:}}
\renewcommand{\algorithmicensure}{\textbf{Ensure:}}
\begin{algorithm}[t]
	\begin{algorithmic}[1]
		\REQUIRE $\{[\tilde{\WB}^\tau]_0\ZB^{t-\tau}, \tau \in [E]\}, \ZB^{t-E - 1}, \ZB^{t-E - 2}$
		\STATE Compute $\ZB^{t-E}$ from its definition;
		\STATE Compute $\psi_0^t$ from (\ref{eqn_psi_unfold}) and $\zB_0^{t+1}$ from (\ref{eqn_update_derivation_t_single_simple_z});
		\STATE $\delta_0^t = B_{0, i_0^t}(\zB_0^{t+1}) - \phi^t_{0, i_0^t}$, update the gradient table;
		\STATE Compute $\{[\tilde{\WB}^\tau]_0\ZB^{t-\tau+1}, \tau \in [E]\}$ from (\ref{eqn_inductive_computation});
		\ENSURE $\ZB^{t-E}, \zB_0^{t+1}, \delta_0^t, \{[\tilde{\WB}^\tau]_0\ZB^{t-\tau+1}, \tau \in [E]\}$
	\end{algorithmic}
	\caption{Computation on node $0$ at iteration $t$}
	\label{alg_computation}
\end{algorithm}
\section{Convergence Analysis} \label{section_convergence_analysis}
In this section, we study the convergence properties of the proposed DSBA method. To achieve this goal, we define a proper Lyapunov function for DSBA and prove its linear convergence to zero which leads to linear convergence of the iterates $\bbz_n^t$ to the optimal solution $\bbz^*$. To do so, first we define $\bbM $ and the sequence of matrices $\bbQ^t$ and $\bbX^t$ as
	\begin{equation}\label{eqn_def_Q_X}
\MB \triangleq \begin{bmatrix}
\tilde{\WB} &  \ZEROBB \\
\ZEROBB & I
\end{bmatrix}
, \ \ 
\QB^t \triangleq \sum_{k=0}^{t} \bbU\ZB^k
, \ \ 
\XB^t \triangleq \begin{bmatrix}
\ZB^t \\
\bbU\QB^t
\end{bmatrix}.
\end{equation}
Recall the definition of $\bbQ^*$ in \eqref{eqn_optimality} and define $\bbX^*$ as the concatenation of $\ZB^*$ and $\UB\QB^*$, i.e., 
$
\XB^* = [
\ZB^* ;
\UB\QB^*
]
$. 

\begin{lemma}\label{lemma_iteration_recursion}
	Consider the proposed DSBA method defined in Algorithm \ref{alg_main}. By incorporating the definitions of the matrices $\QB^t$ and $\XB^t$ in \eqref{eqn_def_Q_X}, it can be shown that
		\begin{equation}
	\alpha [\hat{\BM}^{t}(\ZB^{t+1}) \!-\! \BM(\ZB^*)]\! =\! \tilde{\WB}(\ZB^t \!-\!\ZB^{t+1})\! -\! \UB(\QB^{t+1} \!\!-\! \QB^*),
	\end{equation}
	and 
	\begin{shrinkeq}{-1ex}
		\begin{align}\label{eqn_sth}
	&2\langle \ZB^{t+1} - \ZB^*, \alpha [\BM(\ZB^*) - \hat{\BM}^{t}(\ZB^{t+1})]\rangle \\
	&= \|\XB^{t+1} - \XB^*\|_\bbM^2 + \|\XB^{t+1} - \XB^t\|_\bbM^2 - \|\XB^t - \XB^*\|_\bbM^2.\nonumber
	\end{align}
	\end{shrinkeq}
\end{lemma}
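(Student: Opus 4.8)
The plan is to prove the two claims in order: first derive the operator identity directly from the two-block update, and then obtain the inner-product identity by pairing that operator identity with $2(\ZB^{t+1}-\ZB^*)$ and recognizing the result as a polarization expansion of the $\MB$-weighted norm. For the first identity I would start from the two-block recursion \eqref{eqn_derivation_I}--\eqref{eqn_derivation_II}. Isolating the operator term in the first block gives $\alpha\hat{\BM}^{t}(\ZB^{t+1}) = \ZB^t - \ZB^{t+1} + \bbU\YB^t - 2\bbU\YB^{t+1}$. The key step is to unroll \eqref{eqn_derivation_II} with $\YB^0=\zeroB$, which telescopes to $\YB^{t+1}=\sum_{k=0}^{t}\bbU\ZB^k=\QB^t$; hence $\bbU\YB^t$ and $\bbU\YB^{t+1}$ become $\bbU\QB^{t-1}$ and $\bbU\QB^t$. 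Substituting the increment $\QB^{t+1}-\QB^t=\bbU\ZB^{t+1}$ (and the analogous one for $\QB^t-\QB^{t-1}$), together with the relation $\bbI-\bbU^2=\tilde{\WB}$ that already underlies the derivation of the $\tilde{\WB}$-weighted update \eqref{eqn_update_derivation_t}, collapses the right-hand side to $\tilde{\WB}(\ZB^t-\ZB^{t+1})-\bbU\QB^{t+1}$. Finally, the optimality relation $\alpha\BM(\ZB^*)=-\bbU\QB^*$ from \eqref{eqn_optimality} lets me add $\bbU\QB^*$ to both sides, rewriting the left side as $\alpha[\hat{\BM}^{t}(\ZB^{t+1})-\BM(\ZB^*)]$ and the right side as $\tilde{\WB}(\ZB^t-\ZB^{t+1})-\bbU(\QB^{t+1}-\QB^*)$, which is the first claim.

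For the second identity I would negate the first identity and take the inner product of both sides with $2(\ZB^{t+1}-\ZB^*)$. The left-hand side becomes exactly $2\langle \ZB^{t+1}-\ZB^*,\,\alpha[\BM(\ZB^*)-\hat{\BM}^{t}(\ZB^{t+1})]\rangle$, while the right-hand side splits into a $\tilde{\WB}$-term $2\langle \ZB^{t+1}-\ZB^*,\,\tilde{\WB}(\ZB^{t+1}-\ZB^t)\rangle$ and a coupling term $2\langle \ZB^{t+1}-\ZB^*,\,\bbU(\QB^{t+1}-\QB^*)\rangle$. The aim is to show that the sum of these two equals $2\langle \XB^{t+1}-\XB^*,\,\MB(\XB^{t+1}-\XB^t)\rangle$ and then to apply the polarization identity $2\langle u-w,\,\MB(u-v)\rangle=\|u-w\|_\MB^2+\|u-v\|_\MB^2-\|v-w\|_\MB^2$ with $u=\XB^{t+1}$, $v=\XB^t$, $w=\XB^*$, which produces precisely the three-term right-hand side of \eqref{eqn_sth}.

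The hard part will be matching the coupling term against the second block of the $\MB$-weighted inner product induced by the block structure in \eqref{eqn_def_Q_X}. Concretely, the second-block contribution to $2\langle \XB^{t+1}-\XB^*,\,\MB(\XB^{t+1}-\XB^t)\rangle$ must be shown to coincide with $2\langle \ZB^{t+1}-\ZB^*,\,\bbU(\QB^{t+1}-\QB^*)\rangle$; this is where the consensus/optimality relation $\bbU\ZB^*=\zeroB$ (the second equation of \eqref{eqn_optimality}) and the increment $\QB^{t+1}-\QB^t=\bbU\ZB^{t+1}$ are essential, since together with the symmetry of $\bbU$ they let me move $\bbU$ across the inner product and discard the $\ZB^*$ contribution. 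The $\tilde{\WB}$-block matches directly by symmetry of $\tilde{\WB}$. I expect the remaining work to be routine linear algebra, with the only delicate points being the telescoping $\YB^{t+1}=\QB^t$, the bookkeeping of the index shifts in the $\QB$-increments, and the careful use of $\bbU\ZB^*=\zeroB$ to align the coupling term with the $\QB$-block of the polarization.
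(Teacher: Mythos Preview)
Your proposal is correct and follows essentially the same path as the paper's proof: the paper obtains the first identity by telescoping the single-variable update \eqref{eqn_update_derivation_t} (together with \eqref{eqn_update_derivation_0}) rather than unrolling the two-block form \eqref{eqn_derivation_I}--\eqref{eqn_derivation_II}, but these are equivalent derivations and both land on $\alpha\hat{\BM}^{t}(\ZB^{t+1}) = \tilde{\WB}(\ZB^t - \ZB^{t+1}) - \bbU\QB^{t+1}$ before subtracting the optimality condition \eqref{eqn_optimality}. For the second identity the paper does exactly what you describe---pair with $\ZB^{t+1}-\ZB^*$, use $\bbU\ZB^*=\zeroB$ and $\QB^{t+1}-\QB^t=\bbU\ZB^{t+1}$ to rewrite the coupling term as $\langle \QB^{t+1}-\QB^t,\,\QB^{t+1}-\QB^*\rangle$, and then apply the polarization (law of cosines) in the $\MB$-norm.
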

\begin{proof}
See Section \ref{app:lemma_iteration_recursion} in the supplementary material.
	\end{proof}

The result in Lemma \ref{lemma_iteration_recursion} shows the relation between the norm $\|\XB^{t+1} - \XB^*\|_\bbM^2$ and its previous iterate $\|\XB^{t} - \XB^*\|_\bbM^2$. Therefore, to analyze the speed of convergence for $\|\XB^{t} - \XB^*\|_\bbM^2$ we first need to derive bounds for the remaining terms in \eqref{eqn_sth}. To do so, we need to define a few more terms.
By selecting component operator $i$ on node $n$ in the $t^{th}$ iteration, we define $\zB_{n, i}^{t+1}$ for all $i\in [q]$ as
\begin{shrinkeq}{-1ex}
	\begin{equation*}
\zB_{n, i}^{t+1} \triangleq\!\! \sum_{m \in \NM_n} \! \tilde{w}_{n, m}(2\zB_m^t - \zB_m^{t-1}) - \alpha(\hat{\BM}_n^{t}(\zB_{n, i}^{t+1}) - \hat{\BM}^{t-1}(\zB_n^t)).
\end{equation*}
\end{shrinkeq}
Computing $\zB_{n, i}^{t+1}$ requires to evaluate the resolvent of $\BM_{n, i}$, but here we only define it for the analysis.
In the actual procedure, we only select $i = i_n^t$, compute $\zB_{n, i_n^t}^{t+1}$ , and set $\zB_n^{t+1} = \zB_{n, i_n^t}^{t+1}$.
Having such definition, we define two nonnegative sequences that are crucial to our analysis:
\vspace{-1ex}
	\begin{equation}\label{eqn_def_S}
	S^{t} \triangleq \sum_{n=1}^{N}\frac{2}{q}\sum_{i=1}^{q}\|\BM_{n, i}(\zB^{t}_{n, i}) - \BM_{n, i}(\zB^*)\|^2,
	\end{equation}
	\vspace{-4mm}
	\begin{equation}\label{eqn_def_T}
T^{t} \triangleq \sum_{n=1}^{N} \frac{2}{q}\sum_{i=1}^{q}\langle \zB_{n, i}^{t} - \zB^*, \BM_{n, i}(\zB_{n, i}^{t}) - \BM_{n, i}(\zB^*) \rangle,
\vspace{-1ex}
\end{equation}
where the nonnegativity of the sequence $T^{t}$ is due to the monotonicity of each component operator $\BM_{n, i}$. Define $D^t$ as the component-wise discrepancy between the historically evaluated stochastic gradients and gradients at the optimum
	\begin{equation}\label{eqn_def_D}
D^t = \sum_{n=1}^N\frac{2}{q}\sum_{i=1}^{q}\|\BM_{n, i}(\yB^t_{n, i}) - \BM_{n, i}(\zB^*)\|^2,
\end{equation}
where $\BM_{n, i}(\yB^t_{n, i})$ is maintained by the SAGA strategy.
In the following lemma, we derive an upper bound on the expected inner product $\EBB\langle \ZB^{t+1} - \ZB^*, \BM(\ZB^*) - \hat{\BM}^{t}(\ZB^{t+1})\rangle $.
\begin{lemma}\label{lemma_iteration_recursion_upper_bound}
	Consider the proposed DSBA method defined in Algorithm \ref{alg_main}. Further, recall the defintions of the sequences $S^{t}$, $T^{t}$, and $D^t$ in \eqref{eqn_def_S}, \eqref{eqn_def_T}, and \eqref{eqn_def_D}, respectively. If each operator $\BM_{n, i}$ is $(1/L)$-cocoercive, it holds 	for any $0\leq \theta \leq 1$ and $\eta>0$ that 
	\begin{align}
	&\EBB\langle \ZB^{t+1} - \ZB^*, \BM(\ZB^*) - \hat{\BM}^{t}(\ZB^{t+1})\rangle \\
	\leq &\frac{1}{2\eta}\EBB\|\ZB^{t+1} - \ZB^t\|^2 + \frac{\eta}{4}D^t -\frac{\theta}{2L}S^{t+1} - \frac{1-\theta}{2} T^{t+1}.\nonumber
	\end{align}
\end{lemma}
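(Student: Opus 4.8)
The plan is to argue conditionally on the $\sigma$-algebra $\mathcal{F}_t$ generated by all randomness through iteration $t$, so that the only remaining randomness is the uniform choice of the indices $i_n^t$. Note that the analysis iterates $\zB_{n,i}^{t+1}$, as well as $S^{t+1}$, $T^{t+1}$ and $D^t$, are all $\mathcal{F}_t$-measurable, so the bound may be established in conditional form and the total expectation taken at the end via the tower property. First I would split the Frobenius inner product into its $N$ rows and take the conditional expectation over $i_n^t$, which replaces $\zB_n^{t+1}$ by the average of the analysis iterates:
\begin{equation*}
\EBB\big[\langle \ZB^{t+1}-\ZB^*,\, \BM(\ZB^*)-\hat\BM^t(\ZB^{t+1})\rangle \mid \mathcal{F}_t\big] = \sum_{n=1}^N \frac{1}{q}\sum_{i=1}^q \langle \zB_{n,i}^{t+1}-\zB^*,\, \BM_n(\zB^*)-\hat\BM_n^{t,(i)}(\zB_{n,i}^{t+1})\rangle,
\end{equation*}
where $\hat\BM_n^{t,(i)}(\zB)=\BM_{n,i}(\zB)-\BM_{n,i}(\yB_{n,i}^t)+\bar\phi_n^t$ is the SAGA estimator under the selection $i_n^t=i$.

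The decisive algebraic step is to split $\BM_n(\zB^*)-\hat\BM_n^{t,(i)}(\zB_{n,i}^{t+1})$ into the three pieces
\begin{equation*}
-[\BM_{n,i}(\zB_{n,i}^{t+1})-\BM_{n,i}(\zB^*)]+[\BM_{n,i}(\yB_{n,i}^t)-\BM_{n,i}(\zB^*)]+[\BM_n(\zB^*)-\bar\phi_n^t].
\end{equation*}
Paired against $\zB_{n,i}^{t+1}-\zB^*$ and summed with weight $2/q$, the first piece is exactly $-\tfrac12 T^{t+1}$. Writing $\bbe_{n,i}=\BM_{n,i}(\yB_{n,i}^t)-\BM_{n,i}(\zB^*)$, the last piece equals $-\tfrac1q\sum_j\bbe_{n,j}=-\bar\bbe_n$, which is independent of $i$, so its contribution factors through the node average $\bar\zB_n^{t+1}=\tfrac1q\sum_i\zB_{n,i}^{t+1}$. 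The control-variate structure of SAGA then recentres the two cross terms into an empirical covariance,
\begin{equation*}
(A)+(B)=\sum_{n=1}^N \frac{1}{q}\sum_{i=1}^q \langle \zB_{n,i}^{t+1}-\bar\zB_n^{t+1},\, \bbe_{n,i}-\bar\bbe_n\rangle,
\end{equation*}
using the identity $\tfrac1q\sum_i\langle \aB_i,\bB_i\rangle-\langle\bar\aB,\bar\bB\rangle=\tfrac1q\sum_i\langle \aB_i-\bar\aB,\bB_i-\bar\bB\rangle$.

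I would then apply Young's inequality with parameter $\eta$ to each summand. The $\bbe$-variance is dominated by the second moment, $\sum_i\|\bbe_{n,i}-\bar\bbe_n\|^2\le\sum_i\|\bbe_{n,i}\|^2$, which, summed with weight $2/q$, yields precisely $\tfrac{\eta}{4}D^t$. For the other half, since $\bar\zB_n^{t+1}$ minimises $\cB\mapsto\sum_i\|\zB_{n,i}^{t+1}-\cB\|^2$, the choice $\cB=\zB_n^t$ gives $\sum_i\|\zB_{n,i}^{t+1}-\bar\zB_n^{t+1}\|^2\le\sum_i\|\zB_{n,i}^{t+1}-\zB_n^t\|^2$; summing this with weight $1/q$ over $n$ reproduces $\EBB[\|\ZB^{t+1}-\ZB^t\|^2\mid\mathcal{F}_t]$. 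Together these give $\EBB[\langle\cdots\rangle\mid\mathcal{F}_t]\le\tfrac{1}{2\eta}\EBB\|\ZB^{t+1}-\ZB^t\|^2+\tfrac{\eta}{4}D^t-\tfrac12 T^{t+1}$.

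The final step invokes the $(1/L)$-cocoercivity of each component operator: componentwise $\langle\zB_{n,i}^{t+1}-\zB^*,\BM_{n,i}(\zB_{n,i}^{t+1})-\BM_{n,i}(\zB^*)\rangle\ge\tfrac1L\|\BM_{n,i}(\zB_{n,i}^{t+1})-\BM_{n,i}(\zB^*)\|^2$, hence $T^{t+1}\ge\tfrac1L S^{t+1}$. Splitting $-\tfrac12 T^{t+1}=-\tfrac{1-\theta}{2}T^{t+1}-\tfrac{\theta}{2}T^{t+1}$ and bounding the last summand by $-\tfrac{\theta}{2L}S^{t+1}$ (valid since $\theta\ge0$) delivers the claimed inequality, and a final application of the tower property removes the conditioning. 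I expect the main obstacle to be the covariance identity combined with the mean-minimisation argument: the crux is recognising that the SAGA control variate recentres $(A)+(B)$ into a genuine empirical covariance, and that the resulting variance of the iterates can be absorbed into the computable increment $\|\ZB^{t+1}-\ZB^t\|^2$ rather than an uncontrolled term. The remaining ingredients — Young's inequality, the variance-versus-second-moment bound, and cocoercivity — are routine.
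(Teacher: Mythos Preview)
Your proof is correct and follows essentially the same outline as the paper: split the inner product into the monotone part (yielding $-\tfrac12 T^{t+1}$) and the SAGA cross term, recentre the cross term, apply Young's inequality, bound the variance of the $\bbe_{n,i}$'s by their second moment to produce $\tfrac{\eta}{4}D^t$, and finish with cocoercivity to trade a $\theta$-fraction of $T^{t+1}$ for $\tfrac{1}{L}S^{t+1}$.

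The one genuine difference is the recentring point. The paper subtracts the $\mathcal{F}_t$-measurable anchor $\zB_n^t$ directly, using that $\EBB_{i_n^t}[\bbe_{n,i_n^t}-\bar\bbe_n]=0$ to kill $\langle \zB_n^t-\zB^*,\,\bbe_{n,i_n^t}-\bar\bbe_n\rangle$, and thus arrives immediately at $\tfrac{1}{2\eta}\EBB\|\ZB^{t+1}-\ZB^t\|^2$ after Young. You instead recentre at the empirical mean $\bar\zB_n^{t+1}$ via the covariance identity, which gives the sharper intermediate quantity $\sum_i\|\zB_{n,i}^{t+1}-\bar\zB_n^{t+1}\|^2$, and then spend an extra mean-minimisation step to dominate it by $\sum_i\|\zB_{n,i}^{t+1}-\zB_n^t\|^2$. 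Both routes land on the same bound; the paper's is one step shorter, while yours isolates the variance structure more explicitly before relaxing it.
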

\vspace{-3ex}
\begin{proof}
See Section \ref{app:lemma_iteration_recursion_upper_bound} in the supplementary material.
\end{proof}
\vspace{-2ex}
The next lemma bounds the discrepancy between the average of the historically evaluated stochastic gradients and the gradients at the optimal point.
\begin{lemma}\label{lemma_G_tp1}
	Consider the DSBA method outlined in Algorithm \ref{alg_main}. From the construction of $\hat{\BM}^{t}$ and the definitions of $S^{t}$ and $D^{t}$ in \eqref{eqn_def_S} and \eqref{eqn_def_D}, respectively, we have for $t\geq 0$,
		\begin{equation}
	\EBB\|\hat{\BM}^{t}(\ZB^{t+1}) - \BM(\ZB^*)\|^2 \leq S^{t+1} + D^{t}.
	\end{equation}
\end{lemma}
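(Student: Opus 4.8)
The plan is to decouple the squared norm across nodes and recognize each row as the second moment of a SAGA-style variance-reduced estimator, to which I apply the standard bias–variance split. Since the operators act row-wise, $\|\hat{\BM}^{t}(\ZB^{t+1}) - \BM(\ZB^*)\|^2 = \sum_{n=1}^{N}\|\hat{\BM}_n^{t}(\zB_n^{t+1}) - \BM_n(\zB^*)\|^2$, so I would fix a node $n$ and, using $\phi^t_{n,i} = \BM_{n,i}(\yB_{n,i}^t)$, $\BM_n(\zB^*) = \frac1q\sum_{i}\BM_{n,i}(\zB^*)$, and $\zB_n^{t+1} = \zB_{n,i_n^t}^{t+1}$, write
\begin{align*}
\hat{\BM}_n^{t}(\zB_n^{t+1}) - \BM_n(\zB^*)
&= \underbrace{\BM_{n,i_n^t}(\zB_{n,i_n^t}^{t+1}) - \BM_{n,i_n^t}(\zB^*)}_{a_n}\\
&\quad - \underbrace{\big(\BM_{n,i_n^t}(\yB_{n,i_n^t}^t) - \BM_{n,i_n^t}(\zB^*)\big)}_{b_n} + c_n,
\end{align*}
where $c_n = \frac1q\sum_{i=1}^{q}\big(\BM_{n,i}(\yB_{n,i}^t) - \BM_{n,i}(\zB^*)\big)$.

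The key observation is that, conditioned on all randomness generated before the index $i_n^t$ is drawn at iteration $t$ (I write $\EBB$ for this conditional expectation), the index is uniform on $[q]$, so $c_n = \EBB[b_n]$; that is, $c_n$ is exactly the conditional mean of $b_n$. I would then invoke the elementary bound $\|x+y\|^2 \le 2\|x\|^2 + 2\|y\|^2$ with $x = a_n$ and $y = c_n - b_n$ to get $\|a_n - b_n + c_n\|^2 \le 2\|a_n\|^2 + 2\|b_n - c_n\|^2$, and bound the two terms separately. Uniform sampling yields $\EBB\|a_n\|^2 = \frac1q\sum_i \|\BM_{n,i}(\zB_{n,i}^{t+1}) - \BM_{n,i}(\zB^*)\|^2$, while the variance inequality $\EBB\|b_n - \EBB b_n\|^2 \le \EBB\|b_n\|^2$ (valid precisely because $c_n = \EBB b_n$) gives $\EBB\|b_n - c_n\|^2 \le \frac1q\sum_i \|\BM_{n,i}(\yB_{n,i}^t) - \BM_{n,i}(\zB^*)\|^2$.

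Multiplying by the constant $2$ and summing over $n$, the two resulting bounds match the node-wise summands of $S^{t+1}$ and $D^t$ exactly; this is precisely why the factors $2/q$ were built into the definitions \eqref{eqn_def_S} and \eqref{eqn_def_D}, so that the constant $2$ from the split is absorbed. This yields $\EBB\|\hat{\BM}^{t}(\ZB^{t+1}) - \BM(\ZB^*)\|^2 \le S^{t+1} + D^t$, and since $S^{t+1}$ and $D^t$ are determined by the pre-sampling history, the tower property removes the conditioning. The one place demanding care, and the main obstacle, is the double dependence of $a_n$ on the random index: $i_n^t$ selects both the component operator $\BM_{n,i_n^t}$ and the evaluation point $\zB_n^{t+1} = \zB_{n,i_n^t}^{t+1}$. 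This is exactly why the auxiliary family $\{\zB_{n,i}^{t+1}\}_{i\in[q]}$ was introduced for the analysis: it lets me evaluate $\EBB\|a_n\|^2$ as the genuine uniform average $\frac1q\sum_i \|\BM_{n,i}(\zB_{n,i}^{t+1}) - \BM_{n,i}(\zB^*)\|^2$ rather than treating $\zB_n^{t+1}$ as a fixed point. Once this coupling is accounted for, the remainder is the routine SAGA two-term estimate.
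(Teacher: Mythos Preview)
Your proposal is correct and follows essentially the same argument as the paper: the same three-term decomposition $a_n-b_n+c_n$, the same use of $\|x+y\|^2\le 2\|x\|^2+2\|y\|^2$ with $x=a_n$ and $y=c_n-b_n$, and the same variance inequality $\EBB\|b_n-\EBB b_n\|^2\le \EBB\|b_n\|^2$. Your explicit attention to the coupling of $i_n^t$ with both the operator and the evaluation point via the auxiliary family $\{\zB_{n,i}^{t+1}\}$ is a point the paper uses but does not spell out as carefully here.
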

\vspace{-2ex}
\begin{proof}
See Section \ref{app:lemma_G_tp1} in the supplementary material.
\end{proof}

\begin{lemma}\label{lemma_last_lemma}
	Consider the update rule of Algorithm \ref{alg_main} and the definition $\tilde{\WB}=(\bbI+\bbW)/2$. Further, recall the definitions of the sequences $S^{t}$, $T^{t}$, and $D^t$ in \eqref{eqn_def_S}, \eqref{eqn_def_T}, and \eqref{eqn_def_D}, respectively. If each component operator $\BM_{n, i}$ is $\mu$-strongly monotone, $\EBB\|\XB^t - \XB^*\|^2_M$ is upper bounded by
	\begin{shrinkeq}{-1ex}
		\begin{align}
	&\EBB\|\XB^t - \XB^*\|^2_M \leq (2+\frac{4}{\gamma})\EBB\|\ZB^{t+1} - \ZB^t\|^2_{\tilde{\WB}}+ \frac{1}{\mu}T^{t+1}  \nonumber\\
	&\qquad \quad  + 2\EBB\|\QB^{t+1} - \QB^t\|^2 + \frac{4\alpha^2}{\gamma}(S^{t+1} + D^t).
	\end{align}
	\end{shrinkeq}
\end{lemma}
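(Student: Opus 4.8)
The plan is to expand the $\MB$-weighted norm into its two diagonal blocks and bound each block separately, using Lemma~\ref{lemma_iteration_recursion} to trade the dual block $\UB(\QB^{t+1}-\QB^*)$ for the primal residual $\tilde{\WB}(\ZB^t-\ZB^{t+1})$ plus a gradient-error term, and using strong monotonicity together with Lemma~\ref{lemma_G_tp1} to control the remaining pieces. Throughout I will invoke the spectral facts implied by $\zeroB\preceq\bbW\preceq\bbI$, namely $\tfrac12\bbI\preceq\tilde{\WB}\preceq\bbI$ and $\zeroB\preceq\UB\preceq\bbI$ (so that $\tilde{\WB}^2\preceq\tilde{\WB}$ and $\|\UB\vB\|\le\|\vB\|$), to pass between the $\tilde{\WB}$-, $\UB$-, and Euclidean norms in the correct direction.

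First I would write $\|\XB^t-\XB^*\|_\MB^2=\|\ZB^t-\ZB^*\|_{\tilde{\WB}}^2+\|\UB(\QB^t-\QB^*)\|^2$, using the block-diagonal form of $\MB$ and $\XB^t=[\ZB^t;\UB\QB^t]$. For the primal block I bound $\|\ZB^t-\ZB^*\|_{\tilde{\WB}}^2\le\|\ZB^t-\ZB^*\|^2$ and insert $\ZB^{t+1}$ by a Young-type split into $\|\ZB^{t+1}-\ZB^t\|^2$ and $\|\ZB^{t+1}-\ZB^*\|^2$. The key point is that $\EBB\|\ZB^{t+1}-\ZB^*\|^2$ is governed by $T^{t+1}$: since $\zB_n^{t+1}=\zB_{n,i_n^t}^{t+1}$ with $i_n^t$ uniform on $[q]$, taking expectation turns $\|\ZB^{t+1}-\ZB^*\|^2$ into $\tfrac1q\sum_{n,i}\|\zB_{n,i}^{t+1}-\zB^*\|^2$, and $\mu$-strong monotonicity of each $\BM_{n,i}$ gives $\mu\|\zB_{n,i}^{t+1}-\zB^*\|^2\le\langle\zB_{n,i}^{t+1}-\zB^*,\BM_{n,i}(\zB_{n,i}^{t+1})-\BM_{n,i}(\zB^*)\rangle$, so that $2\,\EBB\|\ZB^{t+1}-\ZB^*\|^2\le\tfrac1\mu T^{t+1}$.

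For the dual block I split $\UB(\QB^t-\QB^*)=\UB(\QB^{t+1}-\QB^*)-\UB(\QB^{t+1}-\QB^t)$. On the first piece I substitute the identity from Lemma~\ref{lemma_iteration_recursion}, $\UB(\QB^{t+1}-\QB^*)=\tilde{\WB}(\ZB^t-\ZB^{t+1})-\alpha[\hat{\BM}^t(\ZB^{t+1})-\BM(\ZB^*)]$, and apply Young's inequality with the free parameter $\gamma$ so that the $\alpha^2$-scaled error receives the $1/\gamma$ weight while $\|\tilde{\WB}(\ZB^t-\ZB^{t+1})\|^2\le\|\ZB^{t+1}-\ZB^t\|_{\tilde{\WB}}^2$ feeds the $\tilde{\WB}$-norm; Lemma~\ref{lemma_G_tp1} then replaces the expected squared error by $S^{t+1}+D^t$, producing the $\tfrac{4\alpha^2}{\gamma}(S^{t+1}+D^t)$ contribution. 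On the second piece I use $\QB^{t+1}-\QB^t=\UB\ZB^{t+1}$ and $\UB\preceq\bbI$ to bound $\|\UB(\QB^{t+1}-\QB^t)\|^2\le\|\QB^{t+1}-\QB^t\|^2$, giving the $2\|\QB^{t+1}-\QB^t\|^2$ term. Collecting the $\tilde{\WB}$-norm contributions from the primal block and from $\|\tilde{\WB}(\ZB^t-\ZB^{t+1})\|^2$ produces the stated coefficient $2+\tfrac4\gamma$.

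The main obstacle I expect is the simultaneous bookkeeping of the three interacting norms and the free parameter $\gamma$: one must route every Young split so that the small $\alpha^2$ factor stays attached to the $1/\gamma$ weight (which will later be tuned against the negative $S$- and $T$-terms from Lemma~\ref{lemma_iteration_recursion_upper_bound} when assembling the Lyapunov descent), while the spectral conversions $\tilde{\WB}^2\preceq\tilde{\WB}$, $\tfrac12\bbI\preceq\tilde{\WB}$, and $\UB\preceq\bbI$ must each be applied in the direction that does not destroy the constants. The most delicate step is the passage from the realized stochastic iterate to the full virtual family $\{\zB_{n,i}^{t+1}\}_{i\in[q]}$, since this is precisely what legitimizes replacing $\EBB\|\ZB^{t+1}-\ZB^*\|^2$ by the monotonicity sequence $T^{t+1}$.
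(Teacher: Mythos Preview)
Your overall architecture is the same as the paper's: split $\|\XB^t-\XB^*\|_\MB^2$ into its two blocks, insert $\ZB^{t+1}$ and $\QB^{t+1}$ by the inequality $\|a+b\|^2\le 2\|a\|^2+2\|b\|^2$, control the primal block via $\mu$-strong monotonicity (giving the $T^{t+1}/\mu$ term), and control the dual block via Lemma~\ref{lemma_iteration_recursion} combined with Lemma~\ref{lemma_G_tp1}. That part is correct.

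The genuine gap is your treatment of $\gamma$. You describe $\gamma$ as a ``free parameter'' in a Young split of $\|\UB(\QB^{t+1}-\QB^*)\|^2$, to be ``tuned later'' against the negative $S$- and $T$-terms. It is not. In the paper $\gamma$ is the smallest nonzero eigenvalue of $\UB^2=(\IB-\WB)/2$; this is precisely what makes $\kappa_g=1/\gamma$ the graph condition number in Theorem~\ref{thm_main}. It enters through the spectral inequality
\[
\gamma\,\|\QB^{t+1}-\QB^*\|^2\ \le\ \|\UB(\QB^{t+1}-\QB^*)\|^2,
\]
which is valid because every column of $\QB^{t+1}-\QB^*$ lies in $\mathrm{range}(\UB)$ (by the construction $\QB^t=\sum_{k\le t}\UB\ZB^k$ and the choice of $\QB^*$ in \eqref{eqn_optimality}). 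Only after this step does one substitute Lemma~\ref{lemma_iteration_recursion} and apply the plain $\|a+b\|^2\le 2\|a\|^2+2\|b\|^2$. Your route---starting from $\|\UB(\QB^t-\QB^*)\|^2$ and inserting $\gamma$ by hand as a Young weight---would make the stated bound hold for every $\gamma>0$, which is vacuous (send $\gamma\to\infty$), and never connects the lemma to the graph.

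This also explains a discrepancy you may have noticed: in the proofs the dual block of $\|\XB^t-\XB^*\|_\MB^2$ is used as $\|\QB^t-\QB^*\|^2$, not $\|\UB(\QB^t-\QB^*)\|^2$. The spectral step above is exactly what bridges the $\QB$-norm (needed for the Lyapunov function) and the $\UB\QB$-expression (supplied by Lemma~\ref{lemma_iteration_recursion}). A minor related point: in the primal block do the triangle split in the $\tilde{\WB}$-norm first and only then pass to the Euclidean norm on the single term $\|\ZB^{t+1}-\ZB^*\|_{\tilde{\WB}}^2\le\|\ZB^{t+1}-\ZB^*\|^2$; dropping the $\tilde{\WB}$ before splitting, as you describe, costs an extra factor of~$2$ when you convert $\|\ZB^{t+1}-\ZB^t\|^2$ back to the $\tilde{\WB}$-norm.
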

\vspace{-3ex}
\begin{proof}
See Section \ref{app:lemma_last_lemma} in the supplementary material.
\end{proof}
\vspace{-2ex}
Having the above lemmas, we now are ready to state the main theorem. We proceed to show that the Lyapunov function $H^t$ defined as 
\begin{shrinkeq}{-1ex}
	\begin{equation}\label{eqn_def_H}
H^t := \|\XB^t - \XB^*\|^2_{\bbM} + cD^t
\end{equation}
\end{shrinkeq}
converges to zero linearly, where $c$ is a positive constant formally defined in the following theorem.  

\begin{theorem}
	\label{thm_main}
	Consider the proposed DSBA method defined in Algorithm \ref{alg_main}. 
	If each component operator $\BM_{n, i}$ is $1/L$-cocoersive and $\mu$-strongly monotone, by taking the step size $\alpha \leq \frac{1}{24L}$ and $c = \frac{q}{96L^2}$, it holds that
	\begin{shrinkeq}{-2ex}
		\begin{equation}\label{eqn_lin_convg}
	 \EBB[H^{t+1}] \leq \left(1 - \min \left\{\frac{\gamma}{12}, \frac{\mu}{48L}, \frac{1}{3q}, \frac{1}{4}\right\}\right)\EBB[H^t].
	\end{equation}
	\end{shrinkeq}
\end{theorem}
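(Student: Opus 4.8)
The plan is to turn the exact energy identity of Lemma~\ref{lemma_iteration_recursion} into a one-step contraction for $H^t$ by also accounting for how the SAGA discrepancy $D^t$ evolves. First I would rearrange the second identity of Lemma~\ref{lemma_iteration_recursion} as
\[
\EBB\|\XB^{t+1}-\XB^*\|_\bbM^2 = \EBB\|\XB^t-\XB^*\|_\bbM^2 - \EBB\|\XB^{t+1}-\XB^t\|_\bbM^2 + 2\alpha\,\EBB\langle \ZB^{t+1}-\ZB^*, \BM(\ZB^*)-\hat{\BM}^t(\ZB^{t+1})\rangle,
\]
and control the cross term with Lemma~\ref{lemma_iteration_recursion_upper_bound}, keeping $\theta$ and $\eta$ as free parameters to be fixed only at the end. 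Separately, a short SAGA bookkeeping step (each stored value $\phi^t_{n,i}$ is refreshed to $\BM_{n,i}(\zB^{t+1}_{n,i})$ with probability $1/q$ and left unchanged otherwise) yields the recursion $\EBB[D^{t+1}] = (1-\tfrac1q)D^t + \tfrac1q S^{t+1}$; since this is the only ingredient not already in the excerpt, I would establish it first.

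Combining these two facts with $H^t = \|\XB^t-\XB^*\|_\bbM^2 + cD^t$, the target $\EBB[H^{t+1}] \le (1-\delta)\EBB[H^t]$ reduces to showing that a single expression is nonpositive, namely
\[
\delta\,\EBB\|\XB^t-\XB^*\|_\bbM^2 + \Big(c\delta-\tfrac{c}{q}+\tfrac{\alpha\eta}{2}\Big)D^t + \Big(\tfrac{c}{q}-\tfrac{\alpha\theta}{L}\Big)S^{t+1} - \alpha(1-\theta)T^{t+1} + \tfrac\alpha\eta\EBB\|\ZB^{t+1}-\ZB^t\|^2 - \EBB\|\XB^{t+1}-\XB^t\|_\bbM^2 \le 0.
\]
The only genuinely negative quantities at my disposal are $-\alpha(1-\theta)T^{t+1}$ and the increment $-\|\XB^{t+1}-\XB^t\|_\bbM^2$, against which the positive distance term $\delta\|\XB^t-\XB^*\|_\bbM^2$ must be traded.

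To perform the trade I would invoke Lemma~\ref{lemma_last_lemma}, which bounds $\EBB\|\XB^t-\XB^*\|_\bbM^2$ by the increments $\|\ZB^{t+1}-\ZB^t\|^2_{\tilde{\WB}}$ and $\|\QB^{t+1}-\QB^t\|^2$ together with $\tfrac1\mu T^{t+1}$ and $\tfrac{4\alpha^2}{\gamma}(S^{t+1}+D^t)$. After substituting this for the $\delta$-scaled distance term, I would split $\|\XB^{t+1}-\XB^t\|_\bbM^2 = \|\ZB^{t+1}-\ZB^t\|^2_{\tilde{\WB}} + \|\UB(\QB^{t+1}-\QB^t)\|^2$ and use three spectral facts: $\tilde{\WB} = \tfrac12(\IB+\WB) \succcurlyeq \tfrac12\IB$, so $\|\ZB^{t+1}-\ZB^t\|^2 \le 2\|\ZB^{t+1}-\ZB^t\|^2_{\tilde{\WB}}$; and, since every increment $\QB^{t+1}-\QB^t = \UB\ZB^{t+1}$ lies in $\mathrm{range}(\UB) = \Span\{\oneB_N\}^\perp$, where $\UB^2 = \IB-\WB$ has smallest nonzero eigenvalue $\gamma$, one gets $\|\UB(\QB^{t+1}-\QB^t)\|^2 \ge \gamma\|\QB^{t+1}-\QB^t\|^2$. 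These steps convert everything into coefficients multiplying the five independent nonnegative quantities $\|\ZB^{t+1}-\ZB^t\|^2_{\tilde{\WB}}$, $\|\QB^{t+1}-\QB^t\|^2$, $T^{t+1}$, $S^{t+1}$, and $D^t$.

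The final step is to choose the constants so that all five coefficients are nonpositive. The $T^{t+1}$ coefficient $\tfrac{\delta}{\mu} - \alpha(1-\theta)$ forces $\delta \le \mu\alpha(1-\theta)$, which with $\alpha=\tfrac1{24L}$ and $\theta=\tfrac12$ produces the $\tfrac{\mu}{48L}$ entry of the minimum; the $\QB$-increment coefficient $2\delta-\gamma$ forces $\delta \le \gamma/2$ and, once the loose constant in Lemma~\ref{lemma_last_lemma} is tracked, the sharper $\tfrac{\gamma}{12}$ entry; the $D^t$ coefficient fed by the SAGA recursion forces $\delta \le \tfrac1{3q}$; and the $\ZB$-increment coefficient, balanced through the choice of $\eta$, yields the $\tfrac14$ entry. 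I expect the main obstacle to be the $S^{t+1}$ coefficient $\tfrac{c}{q} - \tfrac{\alpha\theta}{L} + \tfrac{4\alpha^2\delta}{\gamma}$: the positive $\tfrac{c}{q}$ injected by the SAGA recursion must be dominated by the negative $-\tfrac{\alpha\theta}{L}$ coming from cocoercivity in Lemma~\ref{lemma_iteration_recursion_upper_bound}, and this balance is exactly what pins down $c=\tfrac{q}{96L^2}$ and the step-size ceiling $\alpha \le \tfrac1{24L}$; checking that this choice still leaves enough negative slack to absorb the $\tfrac{4\alpha^2\delta}{\gamma}$ leakage from Lemma~\ref{lemma_last_lemma} is the delicate bit of bookkeeping that makes the whole system consistent.
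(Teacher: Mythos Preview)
Your plan is essentially the paper's own proof: the same energy identity from Lemma~\ref{lemma_iteration_recursion}, the same bound via Lemma~\ref{lemma_iteration_recursion_upper_bound}, the same SAGA recursion $\EBB[D^{t+1}]=(1-\tfrac1q)D^t+\tfrac1q S^{t+1}$, the same trade through Lemma~\ref{lemma_last_lemma}, and the same reduction to five nonnegativity constraints with the parameter choices $\eta=4\alpha$, $\theta=\tfrac12$, $\alpha=\tfrac1{24L}$, $c=\tfrac{q}{96L^2}$.

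One small bookkeeping correction: your attribution of which constraint produces which entry of the minimum is swapped. In the paper the $\QB$-increment coefficient is simply $(1-2\delta)$ (the paper expands $\|\XB^{t+1}-\XB^t\|_\bbM^2$ as $\|\ZB^{t+1}-\ZB^t\|^2_{\tilde\WB}+\|\QB^{t+1}-\QB^t\|^2$, without the extra factor of $\UB$), giving only $\delta\le\tfrac12$; the $\tfrac{\gamma}{12}$ entry actually comes from the $\ZB$-increment coefficient $1-(2+\tfrac4\gamma)\delta-\tfrac{2\alpha}{\eta}$, where the $(2+\tfrac4\gamma)$ from Lemma~\ref{lemma_last_lemma} is what injects the graph dependence. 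The $\tfrac14$ entry is then effectively a redundant safety constant rather than a separate active constraint. This does not affect the correctness of the scheme, only the labeling of where each bound originates.
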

\vspace{-3mm}
\begin{proof}
See Section \ref{app:thm_main} in the supplementary material.
\end{proof}
\vspace{-2ex}
The result in \eqref{eqn_lin_convg} indicates that the Lyapunov function $H^t$ converges to zero Q-linearly in expectation where the coefficient of the linear convergence is a function of graph condition number $\kappa_g\triangleq1/\gamma$, operator condition number $\kappa\triangleq L/\mu$, and number of samples at each node $q$. Indeed, using the definition of $H^t$ in \eqref{eqn_def_H}, the result in Theorem \ref{thm_main} implies R-linear convergence of $\EBB[\|\bbZ^t-\bbZ^*\|^2]$ to zero, i.e., 
\begin{equation*}
\EBB[\|\bbZ^t-\bbZ^*\|_{\tilde{\bbW}}^2] \leq \delta^t (\|\ZB^{0} - \ZB^* \|_{\tilde{\bbW}}^2+\| \QB^{0} - \QB^*\|^2+cD^0),
\end{equation*}
where $\delta := 1 - \min \{\frac{\gamma}{12}, \frac{\mu}{48L}, \frac{1}{3q}, \frac{1}{4}\}$. Note that this result indicates that to obtain an $\epsilon$ accurate solution the number of required iterations is of $\mathcal{O}(\kappa_g+\kappa+q)\log(1/\epsilon)$.

\section{Numerical Experiments}
In this section, we evaluate the empirical performance of DSBA and compare it with several state-of-the-art methods including:
DSA, EXTRA, SSDA, and DLM.
\cite{colin2016gossip} is excluded in comparison since it is sublinear convergent.
Additionally, DSA is implemented using the sparse communication technique developed in Section 5.2.

In all experiments, we set $N=10$ and generate the edges with probability $0.4$.
As to dataset, we use News20-binary, RCV1, and Sector from LIBSVM dataset and randomly split the them into $N$ partitions with equal sizes. 
Further we normalize each data point $\aB_{n, i}$ such that $\|\aB_{n, i}\| = 1$.
We tune the step size of all algorithms and select the ones that give the best performance.
We set $\WB$ to be the Laplacian-based constant edge weight matrix:
$
\bbW = \bbI - \frac{\bbL}{\tau},
$
where $\bbL$ is the Laplacian and $\tau \geq (\lambda_{max}(\bbL))/2$ is a scaling parameter.

\begin{figure}
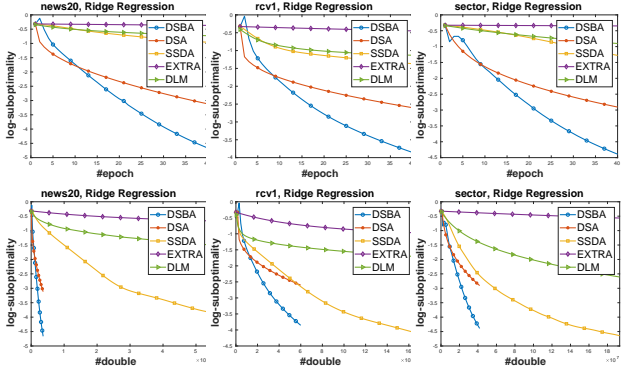

	\begin{tabular}{c@{}c@{}c}
		\includegraphics[width= 0.33\columnwidth]{news20_RR_epoch.eps}& 	\includegraphics[width= 0.33\columnwidth]{rcv1_RR_epoch.eps} & 	\includegraphics[width= 0.33\columnwidth]{sector_RR_epoch.eps}\\
		\includegraphics[width= 0.33\columnwidth]{news20_RR_comm.eps}& 	\includegraphics[width= 0.33\columnwidth]{rcv1_RR_comm.eps} & 	\includegraphics[width= 0.33\columnwidth]{sector_RR_comm.eps}\\
	\end{tabular}
	\vspace{-4mm}
	\caption{Ridge Regression}
	\label{fig_RR}
\end{figure}

\begin{figure}
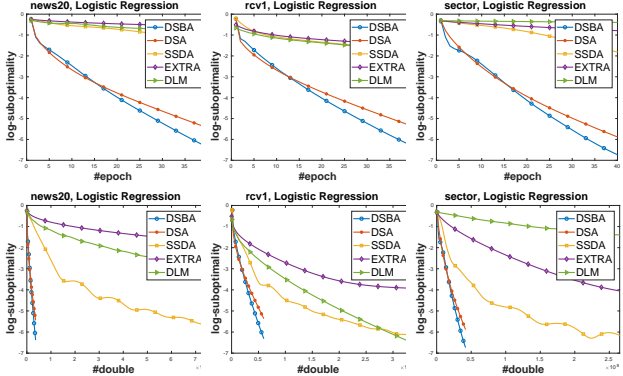

	\begin{tabular}{c@{}c@{}c}
		\includegraphics[width= 0.33\columnwidth]{news20_LR_epoch.eps}& 	\includegraphics[width= 0.33\columnwidth]{rcv1_LR_epoch.eps} & 	\includegraphics[width= 0.33\columnwidth]{sector_LR_epoch.eps}\\
		\includegraphics[width= 0.33\columnwidth]{news20_LR_comm.eps}& 	\includegraphics[width= 0.33\columnwidth]{rcv1_LR_comm.eps} & 	\includegraphics[width= 0.33\columnwidth]{sector_LR_comm.eps}\\
	\end{tabular}
	\vspace{-4mm}
	\caption{Logistic Regression}
	\label{fig_LR}
	\vspace{-4mm}
\end{figure}

We use the \emph{effective pass} over the dataset to measure the cost of computation, which is a common practice in stochastic optimization literature \cite{johnson2013accelerating,defazio2014saga} and is also the one adopted in DSA \cite{mokhtari2016dsa}.
To measure the cost of communication, we let $\CM_n^t$ be the number of DOUBLEs received by node $n$ until iterate $t$ and use $\CM_{max}^t = \max_n \CM_n^t$ as our metric.
Such value $\CM_{max}^t$ captures the communication traffic on the hottest node in the network, which usually is the bottleneck of the learning procedure.

To avoid overfitting and to ensure the strongly monotonicity of an operator $\BM$, we add an $\ell_2$ regularization to all experiments. 
Let $\BM^\lambda = \BM + \lambda \IM$, then the resolvent of $\BM^\lambda$ is closely related to that of $\BM$,
$\JM_{\alpha\BM^\lambda}(\ZB) = \JM_{\rho\alpha\BM}(\rho\ZB)$,
where $\rho = 1 - ({\lambda\alpha})/({1+\lambda\alpha})$ is a scaling factor.
The $\ell_2$-regularization parameter $\lambda$ is set to ${1}/({10 Q})$ in all cases.

\subsection{Ridge Regression}
We define $\BM_{n, i} = (\aB_{n, i}^\top\zB - y_{n, i})\aB_{n, i}$, where $\aB_{n, i}\in \RBB^d$ is the feature vector of a sample in node $n$ and $y_{n, i}\in \RBB$ is its response.
The resolvent of $\alpha\BM_{n, i}$ admits closed form solution: let $z = \frac{\alpha y_{n, i} + \aB_{n, i}^\top\zB}{\alpha + 1}\in\RBB$, then
$\JM_{\alpha\BM_{n, i}}(\zB) = \zB - \alpha(z - y_{n, i})\aB_{n, i}$.
The results are given in Figure \ref{fig_RR}.
It can be seen that the stochastic methods (DSA and DSBA) have the better performance of the deterministic ones.
And DSBA always outperform DSA after several iterations.
\begin{figure}
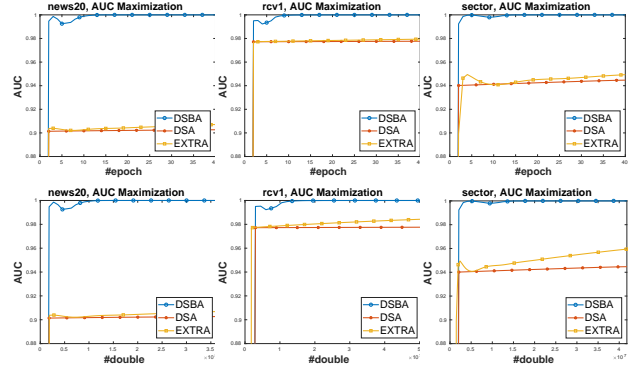

	\begin{tabular}{c@{}c@{}c}
		\includegraphics[width= 0.33\columnwidth]{news20_AUC_epoch.eps}& 	\includegraphics[width= 0.33\columnwidth]{rcv1_AUC_epoch.eps} & 	\includegraphics[width= 0.33\columnwidth]{sector_AUC_epoch.eps}\\
		\includegraphics[width= 0.33\columnwidth]{news20_AUC_comm.eps}& 	\includegraphics[width= 0.33\columnwidth]{rcv1_AUC_comm.eps} & 	\includegraphics[width= 0.33\columnwidth]{sector_AUC_comm.eps}\\
	\end{tabular}
	\vspace{-4mm}
	\caption{$\ell_2$-relaxed AUC maximization}
	\label{fig_AUC}
	\vspace{-5mm}
\end{figure}
\subsection{Logistic Regression}
We define $\BM_{n, i}(\zB) = \frac{-y_{n, i}}{1 + \exp(y_{n, i}\cdot\aB_{n, i}^\top\zB)}\aB_{n, i}$, where $\aB_{n, i}\in \RBB^d$ is the feature vector of a sample and $y_{n, i}\in \{-1, +1\}$ is its class label.
The resolvent $\JM_{\alpha\BM_{n, i}}(\zB)$ does not admit a closed form solution, but can be computed efficiently using a one dimensional newton iteration.
The details are given in the appendix.
We list the experiment results in Figure \ref{fig_LR}.
DSBA has the best performance among all the compared methods, and is able to converge quickly with low communication cost.
\subsection{AUC maximization}
In the $\ell_2$-relaxed AUC maximization, we only compare with DSA and EXTRA because SSDA does not apply and DLM does not converge.
The variable $\zB\in\RBB^{d+3}$ is a $(d+3)$-dimensional augmented vector, where $d$ is the dimension of the dataset.
The component monotone operator $\BM_{n, i}$ is defined in (\ref{eqn_monotone_operator_AUC_p}) and (\ref{eqn_monotone_operator_AUC_n}) in the appendix for positive and negative samples respectively.
Similar to Ridge Regression, the resolvent of $\BM_{n, i}$ also admits a closed form solution, which is explicitly given in the appendix.
The results are given in Figure \ref{fig_AUC}, where DSBA quickly achieves high AUC after a few epochs over the dataset.
\section{Conclusion}
In this paper, we studied the root finding problem of a monotone operator in a decentralized setting.
At a low computation cost, a stochastic algorithm named DSBA is proposed to solve such problem with provably better convergence rate.
By exploiting the dataset sparsity, a sparse communication scheme for implementing DBSA is derived to reduce the communication overhead. 
Our theoretical and numerical results demonstrate the superiority of DSBA over stat-of-the-art deterministic and stochastic decentralized methods. 

\section*{Acknowledgments}
This work is supported by National Natural Science Foundation of China (Grant No: 61472347, 61672376, 61751209), and Zhejiang Provincial Natural Science Foundation of China under Grant No. LZ18F020002.
\bibliography{example_paper}
\bibliographystyle{icml2018}


\newpage

\onecolumn
\section{Supplementary Material}

\subsection{Proof of Lemma \ref{lemma_iteration_recursion}}\label{app:lemma_iteration_recursion}
Note that the update rule (\ref{eqn_update_derivation_t}) can be written as 
	\begin{equation}
	\ZB^{k+1} := \ZB^k + \WB\ZB^k -\tilde{\WB}\ZB^{k-1}- \alpha(\hat{\BM}^{k}(\ZB^{k+1}) - \hat{\BM}^{k-1}(\ZB^{k})),
	\label{eqn_update_derivation_t_proof}
	\end{equation}
	from the definition of $\tilde{\WB}$.
	To prove the first part of the lemma, by summing (\ref{eqn_update_derivation_t_proof}) from $k=1$ to $t$ and (\ref{eqn_update_derivation_0}), one has
	\begin{equation}
		\ZB^{t+1} = (\WB - \tilde{\WB})\sum_{k=0}^{t}\ZB^k + \tilde{\WB}\ZB^t - \alpha\hat{\BM}^{t}(\ZB^{t+1}).
	\end{equation}
	From the definition of $\UB$ and $\QB^{t}$ and the identity $\bbI = 2\tilde{\WB} - \bbW$, we have
	\begin{equation}
	\alpha\hat{\BM}^{t}(\ZB^{t+1}) = \tilde{\WB}(\ZB^t - \ZB^{t+1}) - \bbU\QB^{t+1}.
	\end{equation}
	By subtracting the optimality condition (\ref{eqn_optimality}), we have the result.
	
	From first part, we have
	\begin{align}
	&\langle \ZB^{t+1} - \ZB^*, \alpha [\BM(\ZB^*) - \hat{\BM}^{t}(\ZB^{t+1})]\rangle \nonumber\\
	=& \langle \ZB^{t+1} - \ZB^*, -\tilde{\WB}(\ZB^t - \ZB^{t+1}) + \UB(\QB^{t+1} - \QB^*)\rangle \nonumber\\
	=& \langle \ZB^{t+1} - \ZB^*, \ZB^{t+1} - \ZB^t\rangle_{\tilde{\WB}} + \langle \ZB^{t+1} - \ZB^*, \UB(\QB^{t+1} - \QB^*)\rangle \nonumber\\
	=& \langle \ZB^{t+1} - \ZB^*, \ZB^{t+1} - \ZB^t\rangle_{\tilde{\WB}} + \langle \QB^{t+1} - \QB^t, \QB^{t+1} - \QB^*\rangle,
	\end{align}
	where the last equality uses the definition of $\QB^t$ and that $\UB\ZB^*=\zeroB$.
	By applying the generalized Law of cosines $2\langle a, b\rangle = \|a\|^2 + \|b\|^2 - \|a -b\|^2$ with $a = \XB^{t+1} - \XB^*$ and $b = \XB^{t+1} - \XB^t$, we have the second part.

\subsection{Proof of Lemma \ref{lemma_iteration_recursion_upper_bound}}\label{app:lemma_iteration_recursion_upper_bound}

We have $T^{t+1} \geq \frac{1}{L}S^{t+1}$ from the definition of cocoerciveness.
	Expanding the definition of $\hat{\BM}^{t}(\ZB^{t+1})$, we have
		\begin{align}
		&\EBB\langle\ZB^{t+1} - \ZB^*, \BM(\ZB^*) - \hat{\BM}^{t}(\ZB^{t+1})\rangle \nonumber\\
		= & \sum_{n=1}^{N}  -\EBB_{i_n^t}\langle \zB_{n, i_n^t}^{t+1} - \zB^*, \BM_{n, i_n^t}(\zB_{n, i_n^t}^{t+1}) - \BM_{n, i_n^t}(\zB^*) \rangle \nonumber\\
		& + \EBB_{i_n^t}\langle \zB_{n, i_n^t}^{t+1}- \zB^*, [\BM_{n, i_n^t}(\yB^t_{n, i_n^t})- \BM_{n, i_n^t}(\zB^*)] -[\frac{1}{q}\sum_{i=1}^{q}\BM_{n, i}(\yB^t_{n, i})-\BM_n(\zB^*)]\rangle.
		\end{align}
	The first term is exactly $-\frac{1}{2}T^{t+1}$, and is bounded by $-\frac{1}{2}T^{t+1}\leq -\frac{\theta}{2L}S^{t+1} - \frac{1-\theta}{2} T^{t+1}$ for $0\leq \theta \leq 1$.
	Since 
	\begin{equation}
	\EBB_{i_n^t}\{[\BM_{n, i_n^t}(\yB^t_{n, i_n^t}) - \BM_{n, i_n^t}(\zB^*)] - [\frac{1}{q}\sum_{i=1}^{q}\BM_{n, i}(\yB^t_{n, i})-\BM_n(\zB^*)]\} = \zeroB,
	\end{equation}
	 and $\zB_{n}^{t}$ is independent of $i_n^t$, we have
	 \begin{equation}
	\EBB_{i_n^t}\langle\zB_{n}^{t} - \zB^*, [\BM_{n, i_n^t}(\yB^t_{n, i_n^t}) - \BM_{n, i_n^t}(\zB^*)] - [\frac{1}{q}\sum_{i=1}^{q}\BM_{n, i}(\yB^t_{n, i})-\BM_n(\zB^*)]\rangle = 0.
	\end{equation}
	We bound the second term by
	\begin{align}
	&\sum_{n=1}^{N}\EBB_{i_n^t}\langle\zB_{n, i_n^t}^{t+1}- \zB^*\!, [\BM_{n, i_n^t}(\yB^t_{n, i_n^t}) -\BM_{n, i_n^t}(\zB^*)] -[\frac{1}{q}\sum_{i=1}^{q}\BM_{n, i}(\yB^t_{n, i})-\BM_n(\zB^*)]\rangle \nonumber\\
	= &\!\sum_{n=1}^{N}\EBB_{i_n^t}\langle \zB_{n, i_n^t}^{t+1}- \zB_n^t, [\BM_{n, i_n^t}(\yB^t_{n, i_n^t}) -\BM_{n, i_n^t}(\zB^*)] -[\frac{1}{q}\sum_{i=1}^{q}\BM_{n, i}(\yB^t_{n, i})-\BM_n(\zB^*)]\rangle \nonumber\\
	\leq &\sum_{n=1}^{N} \frac{\eta}{2}\EBB_{i_n^t}\|[\BM_{n, i_n^t}(\yB^t_{n, i_n^t}) - \BM_{n, i_n^t}(\zB^*)] -[\frac{1}{q}\sum_{i=1}^{q}\BM_{n, i}(\yB^t_{n, i})-\BM_n(\zB^*)]\|^2+ \frac{1}{2\eta}\EBB_{i_n^t}\|\zB_{n, i_n^t}^{t+1} - \zB_n^t\|^2 \nonumber\\
	\leq & \sum_{n=1}^{N}\frac{\eta}{2}\EBB_{i_n^t}\|\BM_{n, i_n^t}(\yB^t_{n, i_n^t})- \BM_{n, i_n^t}(\zB^*)\|^2 + \frac{1}{2\eta}\EBB_{i_n^t}\|\zB_{n, i_n^t}^{t+1}- \zB_n^t\|^2\nonumber\\
	= & \frac{1}{2\eta}\EBB\|\ZB^{t+1} - \ZB^t\|^2 + \frac{\eta}{4}D^t,
	\end{align}
	where we use $\langle a, b\rangle \leq \frac{1}{2\eta}\|a\|^2 + \frac{\eta}{2}\|b\|^2$ in first inequality and $\|a - \EBB a\|^2 \leq \|a\|^2$ in the second one.

\subsection{Proof of Lemma \ref{lemma_G_tp1}}\label{app:lemma_G_tp1}
	From the definition of $\hat{\BM}^{t}(\ZB^{t+1})$, on node $n$, we have
	\begin{align}
	\hat{\BM}_n^{t}(\zB_n^{t+1}) &- \BM_n(\zB^*) = [\BM_{n, i_n^t}(\zB_{n, i_n^t}^{t+1}) - \BM_{n, i_n^t}(\zB^*)] - [\BM_{n, i_n^t}(\yB^t_{n, i_n^t}) - \BM_{n, i_n^t}(\zB^*)] + [\frac{1}{q}\sum_{i=1}^{q}\BM_{n, i}(\yB^t_{n, i})-\BM_n(\zB^*)].
	\end{align} 
	Using $\|a+b\|^2 \leq 2\|a\|^2 + 2\|b\|^2$, we have
		\begin{align}
		 & \EBB \|\hat{\BM}^{t}(\ZB^{t+1}) - \BM(\ZB^*)\|^2ã \nonumber€€\\
		 & \leq  \sum_{n=1}^{N} 2\EBB_{i_n^t}\|\BM_{n, i_n^t}(\zB_{n, i_n^t}^{t+1}) - \BM_{n, i_n^t}(\zB^*)\|^2 + 2\EBB_{i_n^t}\|[\BM_{n, i_n^t}(\yB^t_{n, i_n^t}) - \BM_{n, i_n^t}(\zB^*)] - [\frac{1}{q}\sum_{i=1}^{q}\BM_{n, i}(\yB^t_{n, i})-\BM_n(\zB^*)]\|^2 \nonumber \\
		 & \leq  S^{t+1} + D^{t},
	\end{align}
	where the last inequality uses the definition of $D^{t}$ and $S^{t+1}$ and $\|a - \EBB a\|^2 \leq \|a\|^2$.

\subsection{Proof of Lemma \ref{lemma_last_lemma}}\label{app:lemma_last_lemma}

	Expand $\|\XB^t - \XB^*\|^2_M$ by the definition of $\XB^t$ and $\|\cdot\|_M$ and suppose $\ZB^{t+1}$ and $\QB^{t+1}$ are generated from some fixed $i_n^t, n\in[N]$.
	Using $\|a+b\|^2 \leq 2\|a\|^2 + 2\|b\|^2$, we have
	\begin{equation}
	\begin{aligned}
	\|\XB^t - \XB^*\|^2_M &= \|\ZB^t - \ZB^*\|^2_{\tilde{\WB}} + \|\QB^t - \QB^*\|^2 \\
	&\leq 2\|\ZB^{t+1} - \ZB^{t}\|^2_{\tilde{\WB}} + 2\|\ZB^{t+1} - \ZB^*\|^2_{\tilde{\WB}} + 2\|\QB^{t+1} - \QB^t\|^2 + 2\|\QB^{t+1} - \QB^*\|^2.
	\end{aligned}
	\label{eqn_X_upper_bound}
	\end{equation}
	We now bound the second term and last term.
	Using 
	\begin{equation}
	\|\ZB^{t+1} - \ZB^*\|^2_{\tilde{\WB}} \leq \|\ZB^{t+1} - \ZB^*\|^2
	\end{equation}
	since $\tilde{\WB} \preccurlyeq I$, and the $\mu$-strongly monotonicity of $\BM_{n, i_n^t}$, we have 
	\begin{align}
	\|\ZB^{t+1} - \ZB^*\|^2_{\tilde{\WB}} \leq  \frac{1}{\mu}\!\sum_{n=1}^{N}\langle\zB_{n, i_n^t}^{t+1}- \zB^*, \BM_{n, i_n^t}(\zB_{n, i_n^t}^{t+1})- \BM_{n, i_n^t}(\zB^*)\rangle.
	\end{align} 
	
	From the construction of $\QB^{t+1}$ and $\QB^*$, every column of $\QB^{t+1} - \QB^*$ is in $\mathrm{span}(U)$, thus we have
	\begin{equation}
	\gamma\|\QB^{t+1} - \QB^*\|^2 \leq \|U(\QB^{t+1} - \QB^*)\|^2,
	\end{equation}
	where $\gamma$ is the smallest nonzero singular value of $U^2 = \tilde{\WB} - W$.
	From Lemma \ref{lemma_iteration_recursion}, we write
	\begin{align}
	 \|U(\QB^{t+1} - \QB^*)\|^2 
	=\ & \|\alpha[\hat{\BM}^{t}(\ZB^{t+1}) - \BM(\ZB^*)] + \tilde{\WB}(\ZB^{t+1} - \ZB^t)\|^2 \nonumber\\
	\leq\ & 2\alpha^2\|\hat{\BM}^{t}(\ZB^{t+1}) - \BM(\ZB^*)\|^2 + 2\|\ZB^{t+1} - \ZB^t\|_{\tilde{\WB}}^2.
	\end{align}
	Substituting these two upper bounds into (\ref{eqn_X_upper_bound}), we have

	\begin{align}
	\|\XB^t - \XB^*\|^2_M 
	&\leq (2+\frac{4}{\gamma})\|\ZB^{t+1} - \ZB^t\|^2_{\tilde{\WB}} + 2\|\QB^{t+1} - \QB^t\|^2  + \frac{2}{\mu}\sum_{n=1}^{N}\langle\zB_{n, i_n^t}^{t+1} - \zB^*, \BM_{n, i_n^t}(\zB_{n, i_n^t}^{t+1}) - \BM_{n, i_n^t}(\zB^*)\rangle \nonumber\\
	&\qquad+ \frac{4\alpha^2}{\gamma}\|\hat{\BM}^{t}(\ZB^{t+1}) - \BM(\ZB^*)\|^2.
	\end{align}
	Taking expectation and using Lemma \ref{lemma_G_tp1}, we have the result.

\subsection{Proof of Theorem \ref{thm_main}}\label{app:thm_main}

From Lemma \ref{lemma_iteration_recursion} and \ref{lemma_iteration_recursion_upper_bound}, we have
	\begin{align}
	&\EBB\|\XB^{t+1} - \XB^*\|_\bbM^2 - \|\XB^t - \XB^*\|_\bbM^2 + \EBB\|\XB^{t+1} - \XB^t\|_\bbM^2 \nonumber\\
	& = 2\alpha\EBB\langle\ZB^{t+1} - \ZB^*, \BM(\ZB^*) - \hat{\BM}^{t}(\ZB^{t+1})\rangle\nonumber\\
	&\leq \frac{\alpha}{\eta}\EBB\|\ZB^{t+1} - \ZB^t\|^2 + \frac{\eta\alpha}{2}D^t -\frac{\theta\alpha}{L}S^{t+1} - (1-\theta)\alpha T^{t+1}.
	\end{align}
	Also for $D^{t+1}$, we have 
	\begin{align}
	\EBB D^{t+1} =\ & \sum_{n=1}^N\frac{2}{q}\sum_{i=1}^{q}\EBB_{i_n^t}\|\BM_{n, i}(\yB^{t+1}_{n, i}) - \BM_{n, i}(\zB^*)\|^2\nonumber\\
	=\ &\sum_{n=1}^N\frac{2}{q}\sum_{i=1}^{q}\{\frac{1}{q}\|\BM_{n, i}(\zB_{n, i}^{t+1}) - \BM_{n, i}(\zB^*)\|^2+ (1-\frac{1}{q})\|\BM_{n, i}(\yB^{t}_{n, i}) - \BM_{n, i}(\zB^*)\|^2\} \nonumber\\
	=\ & (1 - \frac{1}{q})D^t + \frac{1}{q}S^{t+1}.
	\end{align}
	By adding $cD^{t+1}$ and rearranging terms, we have
	\begin{align}
\EBB[\|\XB^{t+1} - \XB^*\|^2_{M} + cD^{t+1}] 
	\leq & \|\XB^t -\XB^*\|_\bbM^2 - \EBB\|\XB^{t+1} - \XB^t\|_\bbM^2 + (1 - \frac{1}{q})cD^t + \frac{c}{q}S^{t+1} \nonumber\\
	& + \frac{\alpha}{\eta}\EBB\|\ZB^{t+1} - \ZB^t\|^2 + \frac{\eta\alpha}{2}D^t -\frac{\theta\alpha}{L}S^{t+1} - (1-\theta)\alpha T^{t+1}.
	\end{align}
	If we further have 
	\begin{align}
	(1-\delta)[\|\XB^t - \XB^*\|^2_{M} + cD^t] 
	\geq & \|\XB^t - \XB^*\|_\bbM^2 -\EBB\|\XB^{t+1}-\XB^t\|_\bbM^2 + (1 - \frac{1}{q})cD^t + \frac{c}{q}S^{t+1} \nonumber\\
	& + \frac{\alpha}{\eta}\EBB\|\ZB^{t+1} - \ZB^t\|^2 + \frac{\eta\alpha}{2}D^t -\frac{\theta\alpha}{L}S^{t+1} - (1-\theta)\alpha T^{t+1},
	\end{align}
	then we have the result.
	The above inequality is equivalent to 
	\begin{equation}
	\begin{aligned}
	&(\frac{c}{q}-c\delta - \frac{\alpha\eta}{2})D^t + (\frac{\alpha\theta}{L} - \frac{c}{q})S^{t+1} + \alpha(1-\theta)T^{t+1} \\
	\geq & \underbrace{\delta\|\XB^t - \XB^*\|_\bbM^2 - \|\XB^{t+1} - \XB^t\|_\bbM^2 + \frac{\alpha}{\eta}\EBB\|\ZB^{t+1} - \ZB^t\|^2}_\Lambda,
	\end{aligned}
	\end{equation}
	and hence a sufficient condition is that an upper bound of the right hand side is less than the left hand side.
	
	To bound $\Lambda$, using Lemma \ref{lemma_last_lemma} for the first term, the definition of $\|\XB^{t+1} - \XB^t\|_\bbM^2$ for the second term, and 
	\begin{equation}
	\frac{1}{2} \|\ZB^{t+1} - \ZB^t\|^2 \leq \|\ZB^{t+1} - \ZB^t\|_{\tilde{\WB}}^2
	\end{equation}
	for the third term since $\frac{1}{2}I\preccurlyeq \tilde{\WB}$, we have 
	\begin{align}
	\Lambda \leq& \delta[(2+\frac{4}{\gamma})\EBB\|\ZB^{t+1} - \ZB^t\|_{\tilde{\WB}}^2 + \frac{1}{\mu}T^{t+1} + 2\EBB\|\QB^{t+1} - \QB^t\|^2 + \frac{4\alpha^2}{\gamma}(S^{t+1} + D^t)] \nonumber\\
	& - \EBB\|\ZB^{t+1} - \ZB^t\|^2_{\tilde{\WB}} - \EBB\|\QB^{t+1} - \QB^{t}\|^2  + \frac{2\alpha}{\eta}\EBB\|\ZB^{t+1} - \ZB^t\|_{\tilde{\WB}}^2 .
	\end{align}
	Uniting like terms gives us the following sufficient condition for Theorem \ref{thm_main} to stand:
	\begin{align}
	&(\frac{c}{q}-c\delta - \frac{\alpha\eta}{2} - \frac{4\delta\alpha^2}{\gamma})D^t + (\frac{\alpha\theta}{L} - \frac{c}{q} - \frac{4\delta\alpha^2}{\gamma})S^{t+1} + (\alpha(1-\theta) - \frac{\delta}{\mu})T^{t+1} \nonumber\\
	&+ (1 - 2\delta)\EBB\|\QB^{t+1} - \QB^{t}\|^2 + (1 - (2+\frac{4}{\gamma})\delta - \frac{2\alpha}{\eta})\EBB\|\ZB^{t+1} - \ZB^t\|^2_{\tilde{\WB}}\geq 0.
	\end{align}
	Since every term in the above inequality is nonnegative, this inequality holds when every bracket is nonnegative.
	Let
	\begin{equation}
	\alpha = \frac{\tau}{L}, \eta = 4\alpha, \theta = \frac{1}{2}, c = \frac{mq}{L^2},
	\end{equation}
	where $\tau$ and $m$ are constant to be set.
	The non-negativity of of the first two brackets equivalents to 
	\begin{equation}
	\begin{cases}
	c(\frac{1}{3q} - \delta) + \frac{2m}{3L^2} - \frac{2\tau^2}{L^2} - \frac{\delta}{\gamma}\frac{4\tau^2}{L^2} \geq 0 \\
	\frac{\tau}{2L^2} - \frac{m}{L^2} - \frac{\delta}{\gamma}\frac{4\tau^2}{L^2} \geq 0
	\end{cases}
	\end{equation}
	Taking $\tau = \frac{1}{24}, m = \frac{1}{96}, \delta \leq \min\{\frac{\gamma}{12}, \frac{\mu}{48L}, \frac{1}{3q}, \frac{1}{4}\}$, we have the result.
\subsection{Resolvent of Logistic Regression}
In Logistic Regression, each component operator $\BM_{n, i}$ is defined as $\BM_{n, i}(\zB) = \frac{-y_{n, i}}{1 + \exp(y_{n, i}\cdot\aB_{n, i}^\top\zB)}\aB_{n, i}$, where $\aB_{n, i}\in \RBB^d$ is the feature vector of a sample and $y_{n, i}\in \{-1, +1\}$ is its class label.
The resolvent, $\JM_{\alpha\BM_{n, i}}(\zB)$, does not admit a closed form solution, but can be computed efficiently by the following newton iteration: let $a_0 = 0$, $b = \aB_{n, i}^\top\zB$
\begin{equation}
e_k = \frac{-y_{n, i}}{1+\exp(y_{n, i} a_k)} ~~\mathrm{and}~~ a_{k+1} = a_k - \frac{\alpha e_k + a_k - b}{1 - \alpha y_{n, i} e_k - \alpha e_k^2}.
\end{equation}
When the iterate converges, the resolvent is obtain by
\begin{equation}
	\JM_{\alpha\BM_{n, i}}(\zB) = \zB - (b - a_{k})\aB_{n, i}.
\end{equation}
In our experiments, $20$ newton iteration is sufficient for DSBA.
\subsection{Resolvent of AUC maximization}
In the $\ell_2$-relaxed AUC maximization, the variable $\zB\in\RBB^{d+3}$ is a $d+3$-dimensional augmented vector, where $d$ is the dimension of the dataset.
For simplicity, we decompose $\zB$ as $\zB = [\wB^\top; a; b; \theta]$ with $\wB \in \RBB^d$, $a \in \RBB$, $b \in \RBB$, $\theta \in \RBB$.
For a positive sample, i.e. $y_{n, i} = +1$, the component operator $\BM_{n, i}$ is then defined as
\begin{equation}
\label{eqn_monotone_operator_AUC_p}
\BM_{n, i}(\zB) =
\begin{bmatrix}
2(1-p)((\aB_{n, i}^\top\wB - a) - (1+\theta))\aB_{n, i} \\
-2(1-p)(\aB_{n, i}^\top\wB - a) \\
0 \\
2p(1-p)\theta+2(1-p)\aB_{n, i}^\top\wB
\end{bmatrix}
\end{equation}
and for a negative sample, i.e. $y_{n, i} = -1$
\begin{equation}
\label{eqn_monotone_operator_AUC_n}
\BM_{n, i}(\zB) =
\begin{bmatrix}
2p((\aB_{n, i}^\top\wB - b) + (1+\theta))\aB_{n, i} \\
0 \\
-2p(\aB_{n, i}^\top\wB - b) \\
2p(1-p)\theta-2p\aB_{n, i}^\top\wB
\end{bmatrix}
\end{equation}
where $p = \frac{\#\mathrm{positive~samples}}{\#\mathrm{samples}}$ is the positive ratio of the dataset.
Similar to RR, the resolvent of $\BM_{n, i}$ also admits a closed form solution, which we now derive.
For a positive sample, define
\begin{equation}
\bbA^+ = \begin{bmatrix}
1+2(1-p)\alpha &-2(1-p)\alpha&0&-2(1-p)\alpha \\
-2(1-p)\alpha & 1+2(1-p)\alpha&0&0 \\
0&0&1&0 \\
2(1-p)\alpha&0&0&1+2p(1-p)\alpha
\end{bmatrix}
\end{equation}
and 
\begin{equation}
\bbb^+ = \begin{bmatrix}
\aB_{n, i}^\top\wB + 2(1-p)\alpha\\
a\\
b\\
\theta
\end{bmatrix}.
\end{equation}
Let $\bbb_r^+ = (\bbA^+)^{-1}\bbb^+ \in \RBB^4$ and decompose it as $\bbb_r^+ = [z_r^+; a_r^+; b_r^+; \theta^+_r]$.
The resolvent is obtain as
\begin{equation}
\label{eqn_resolvent_AUC_p}
\JM_{\alpha\BM_{n, i}}(\zB) =\bbz^+_r = \begin{bmatrix}
[\wB - 2(1-p)\alpha[(z_r^+ - a) - (1+\theta)]\aB_{n, i}\\
a_r^+\\
b_r^+\\
\theta_r^+
\end{bmatrix}
\end{equation}
We can do the similar derivation for a negative sample. Define
\begin{equation}
\bbA^- = \begin{bmatrix}
1+2p\alpha&0&-2p\alpha&2p\alpha \\
0&1&0&0\\
-2p\alpha&0&1+2p\alpha&0 \\
-2p\alpha&0&0&1+2p(1-p)\alpha
\end{bmatrix}
\end{equation}
and 
\begin{equation}
\bbb^+ = \begin{bmatrix}
\aB_{n, i}^\top\wB - 2p\alpha\\
a\\
b\\
\theta
\end{bmatrix}
\end{equation}
Let $\bbb_r^- = (\bbA^-)^{-1}\bbb^- \in \RBB^4$ and decompose it as $\bbb_r^- = [z_r^-; a_r^-; b_r^-; \theta^-_r]$
The resolvent is obtain as
\begin{equation}
\label{eqn_resolvent_AUC_n}
\JM_{\alpha\BM_{n, i}}(\zB) =\bbz^+_r = \begin{bmatrix}
[\wB - 2p\alpha[(z_r^- - b) - (1+\theta)]\aB_{n, i}\\
a_r^-\\
b_r^-\\
\theta_r^-
\end{bmatrix}.
\end{equation}

\end{document}